\documentclass{article}
\usepackage{bbm}
\usepackage[utf8]{inputenc}

\usepackage[table]{xcolor}

\usepackage{microtype}
\usepackage{graphicx}
\usepackage{subcaption}
\usepackage{wrapfig}
\usepackage{multirow}
\usepackage{cancel}

\usepackage{amsmath}
\usepackage{amssymb}
\usepackage{mathtools}
\usepackage{amsthm}
\usepackage{longtable}
\usepackage{booktabs}
\usepackage{algorithm}
\usepackage{listings}

\definecolor{best}{RGB}{0,0,0}

\usepackage[most]{tcolorbox}
\usepackage{xcolor}

\definecolor{ICMLAccent}{HTML}{2B6E73} % muted teal
\definecolor{ICMLBg}{HTML}{F3F7F7}     % very light teal-gray
\definecolor{ICMLStrong}{HTML}{1F4E79} % deeper blue for "important"
\definecolor{ICMLStrongBg}{HTML}{EEF4FA} % light blue-gray
\usepackage{enumitem}
\newtcolorbox{callout}[1][]{%
  breakable,
  enhanced,
  colback=ICMLBg,
  colframe=ICMLAccent,
  boxrule=0.5pt,
  arc=2mm,
  left=1.5mm,right=1.5mm,top=1.0mm,bottom=1.0mm,
  title={}, % no naming
  #1
}

\newtcolorbox{calloutimportant}[1][]{%
  breakable,
  enhanced,
  colback=ICMLStrongBg,
  colframe=ICMLStrong,
  boxrule=0.6pt,
  arc=2mm,
  left=1.8mm,right=1.5mm,top=1.0mm,bottom=1.0mm,
  borderline west={2.2pt}{0pt}{ICMLStrong}, % left accent bar
  title={}, % no naming
  #1
}

\newtcolorbox{calloutmotivation}[1][]{%
  breakable,
  enhanced,
  colback=ICMLStrongBg,
  colframe=ICMLStrong,
  boxrule=0.6pt,
  arc=2mm,
  left=1.8mm,right=1.5mm,top=1.0mm,bottom=1.0mm,
  borderline west={2.2pt}{0pt}{ICMLStrong},
  title={},            % remove the header/title line
  fonttitle=\bfseries, % harmless to keep; ignored with empty title
  #1
}

\definecolor{codeblue}{rgb}{0.8,0.2,0.2}
\definecolor{codegray}{rgb}{0.5,0.5,0.5}
\definecolor{highlight}{RGB}{220,235,255}

\definecolor{stdgray}{gray}{0.45}

\definecolor{LightBlue}{RGB}{80,160,255}

\PassOptionsToPackage{
  colorlinks=true,
  citecolor=LightBlue,
  linkcolor=LightBlue,
  urlcolor=LightBlue
}{hyperref}

\usepackage[accepted]{icml2026}

\usepackage{hyperref}
\hypersetup{
  colorlinks=true,
  citecolor=LightBlue,
  linkcolor=LightBlue,
  urlcolor=LightBlue
}

\usepackage[capitalize,noabbrev]{cleveref}

\newcommand{\mstd}[2]{%
  #1\,{\color{stdgray}\scriptsize$\pm#2$}%
}
\newcommand{\bestmstd}[2]{%
  \textbf{#1}\,{\color{stdgray}\scriptsize$\pm#2$}%
}
\newcommand{\bestval}[1]{{\color{best}\textbf{#1}}}

\newcommand{\imp}[2]{\textcolor{best}{\textbf{+#1}}\ {\color{stdgray}\scriptsize(#2\%)}}

\theoremstyle{plain}
\newtheorem{theorem}{Theorem}[section]
\newtheorem{proposition}[theorem]{Proposition}

\theoremstyle{definition}

\theoremstyle{remark}

\usepackage[textsize=tiny]{todonotes}

\icmltitlerunning{Zero-Shot Off-Policy Learning}
\begin{document}

\twocolumn[
  %\icmltitle{Zero-Shot Off-Policy Adaptation}
  \icmltitle{Zero-Shot Off-Policy Learning}
  %\icmltitle{Zero-Shot Off-Policy Algorithms}
  %\icmltitle{Zero-Shot Stationary Distribution Corrections}

  % It is OKAY to include author information, even for blind submissions: the
  % style file will automatically remove it for you unless you've provided
  % the [accepted] option to the icml2026 package.

  % List of affiliations: The first argument should be a (short) identifier you
  % will use later to specify author affiliations Academic affiliations
  % should list Department, University, City, Region, Country Industry
  % affiliations should list Company, City, Region, Country

  % You can specify symbols, otherwise they are numbered in order. Ideally, you
  % should not use this facility. Affiliations will be numbered in order of
  % appearance and this is the preferred way.
  \icmlsetsymbol{equal}{*}
  \icmlsetsymbol{visiting}{\dagger}

  \begin{icmlauthorlist}
    \icmlauthor{Arip Asadulaev}{yyy,equal}
    % \icmlauthor{Maksim Bobrin}{zzz,xxx,equal}
    \icmlauthor{Maksim Bobrin}{zzz,xxx,mbz,equal}
    \icmlauthor{Salem Lahlou}{yyy}
    \icmlauthor{Dmitry Dylov}{zzz,xxx}
    \icmlauthor{Fakhri Karray}{yyy}
    \icmlauthor{Martin Takac}{yyy}
    %\icmlauthor{}{sch}
    %\icmlauthor{}{sch}
    %\icmlauthor{}{sch}
  \end{icmlauthorlist}

  \icmlaffiliation{yyy}{MBZUAI}
  \icmlaffiliation{mbz}{MBZUAI (Internship)}
  \icmlaffiliation{zzz}{Applied AI Institute, Computational Imaging Lab}
  \icmlaffiliation{xxx}{AXXX}
  
  %\icmlaffiliation{comp}{Company Name, Location, Country}
  %\icmlaffiliation{sch}{School of ZZZ, Institute of WWW, Location, Country}

  \icmlcorrespondingauthor{Arip Asadulaev}{arip.asadulaev@mbzuai.ac.ae}
  %\icmlcorrespondingauthor{Firstname2 Lastname2}{first2.last2@www.uk}

  % You may provide any keywords that you find helpful for describing your
  % paper; these are used to populate the "keywords" metadata in the PDF but
  % will not be shown in the document
  \icmlkeywords{Machine Learning, ICML, Successor Measure, Zero-Shot RL, Forward-Backward, Successor features}

  \vskip 0.3in
]

% this must go after the closing bracket ] following \twocolumn[ ...

% This command actually creates the footnote in the first column listing the
% affiliations and the copyright notice. The command takes one argument, which
% is text to display at the start of the footnote. The \icmlEqualContribution
% command is standard text for equal contribution. Remove it (just {}) if you
% do not need this facility.

% Use ONE of the following lines. DO NOT remove the command.
% If you have no special notice, KEEP empty braces:
%\printAffiliationsAndNotice{}  % no special notice (required even if empty)
% Or, if applicable, use the standard equal contribution text:
\printAffiliationsAndNotice{
  \icmlEqualContribution
}

\begin{abstract}

Off-policy learning methods seek to derive an optimal policy directly from a fixed dataset of prior interactions. This objective presents significant challenges, primarily due to the inherent distributional shift and value function overestimation bias. 
These issues become even more noticeable in \emph{zero-shot} reinforcement learning, where an agent trained on reward-free data must adapt to new tasks at test time without additional training. 
In this work, we address the off-policy problem in a zero-shot setting by discovering a theoretical connection of successor measures to stationary density ratios. 
Using this insight, our algorithm can infer optimal importance sampling ratios, effectively performing a stationary distribution correction with an optimal policy \emph{for any task on the fly}.
We benchmark our method in motion tracking tasks on \texttt{SMPL Humanoid}, continuous control on \texttt{ExoRL}, and for the long-horizon \texttt{OGBench} tasks. 
Our technique seamlessly integrates into forward-backward representation frameworks and enables \emph{fast}-adaptation to new tasks in a \emph{training-free} regime.
More broadly, this work bridges off-policy learning and zero-shot adaptation, offering benefits to both research areas.
\end{abstract}

\vspace{-5mm}

\section{Introduction}
\begin{figure}
    \centering
    \includegraphics[width=0.99\linewidth]{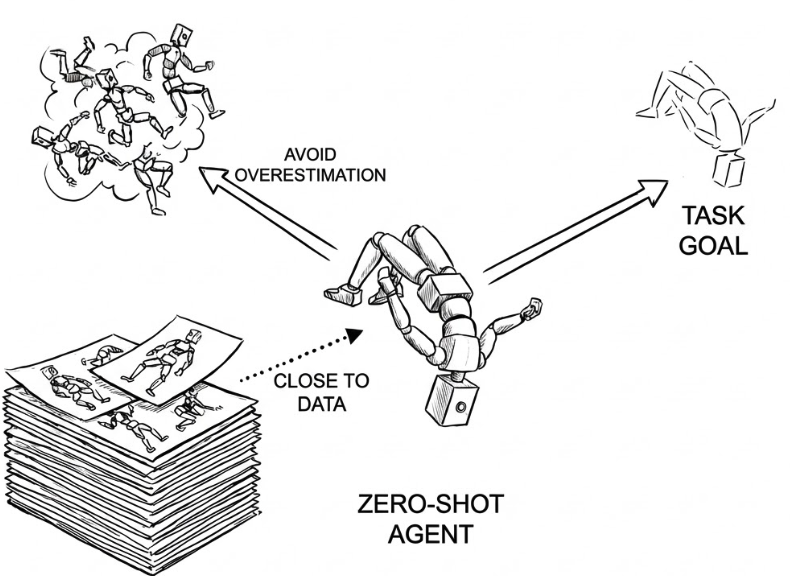}
    \caption{A sketch of the zero-shot off-policy adaptation. The agent completes the new task by combining actions from its experience.}
    \label{fig:teaser}
    \vspace{-7mm}
\end{figure}
Unsupervised (or self-supervised) pretraining has become a central ingredient behind recent progress in foundation models. In this approach, large, unlabeled datasets are leveraged to learn reusable structure, which is then specialized with lightweight supervision. Translating this recipe to Reinforcement Learning (RL) promises \emph{generalist agents} that acquire behavioral priors from offline data and can be prompted for any task at deployment time. These agents are often referred to as \emph{Behavioral Foundation Models (BFMs)} \citep{park2024foundation, agarwal2025proto}.

In this \emph{unsupervised zero-shot RL} paradigm, the model learns a representation of a large class of possible downstream tasks. Specifically, it learns a \emph{family of policies} indexed or conditioned by a low-dimensional task descriptor \citep{agarwal2025a}. At test time, a task specification is used to \emph{retrieve} a corresponding policy without additional learning or finetuning stage. Such specifications can be prompted via text \citep{vainshtein2025task}, goals \citep{tirinzoni2024metamotivo}, motion capture \citep{pirotta2023fast,li2025bfm}, or videos \citep{sikchi2024rlzero}. However, this inference-based adaptation inherits a core difficulty of data-driven decision making. The policy induced by the inferred task can query parts of the state(-action) space that are poorly represented in the pretraining data. This leads to systematic errors in value estimates and, consequently, suboptimal behavior.

This perspective connects BFMs to the challenges of \emph{off-policy} and \emph{offline} RL \citep{levine2020offline}. Offline RL aims to synthesize new behavior by reusing previously collected experience. Yet a learned policy typically induces a visitation distribution that differs from the data-collecting behavior, so value estimation can become incorrect when the policy queries out-of-distribution (OOD) regions \citep{kumar2020conservative}. Unlike offline RL where the objective is tied to a single task, BFMs must \emph{generalize across tasks} that may only be revealed at test time, making robust off-policy reasoning even more critical (Fig.~\ref{fig:teaser}) \citep{jeen2024}.

\begin{figure*}[h!]
  \centering
  \includegraphics[width=\textwidth]{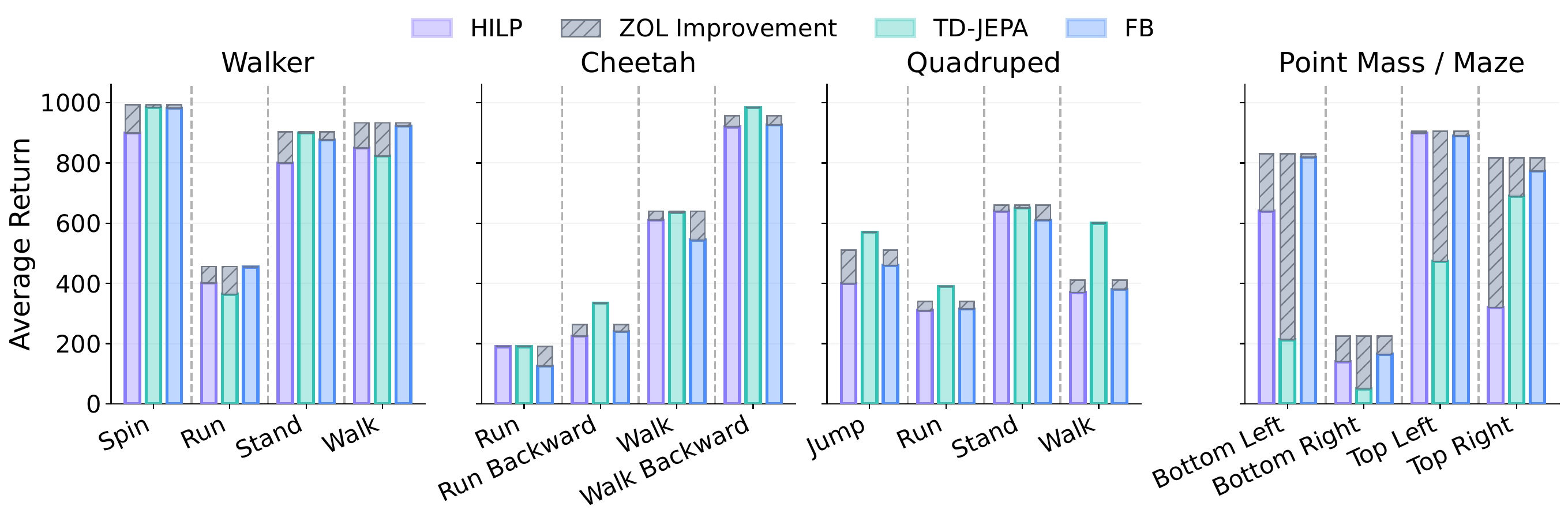}
  \caption{ZOL improvements over other BFMs on ExORL (DMC-state) benchmark. For full results, refer to Tbl.~\ref{table:exorl_merged}. Despite not having access to the environment, ZOL performs better, especially on long horizon tasks.}
  \vspace{-3mm}
  \label{fig:exorl-panels}
\end{figure*}

A principled way to reason about this mismatch is through \emph{stationary occupancy correction}. Formally, let $d^\beta$ denote the discounted occupancy induced by the data-collecting behavior $\beta$, and $d^\pi$ the occupancy induced by a policy $\pi$. Their stationary density ratio $w_\pi = d^\pi/d^\beta$ quantifies how strongly $\pi$ shifts visitation relative to the dataset. The goal is to find an optimal $w^*$, that acts as the correction ratio with an \emph{unknown, optimal} task-policy $d^{\pi^*}$. Off-policy evaluation and optimization via stationary occupancy correction have been studied by the DICE family of algorithms \citep{nachum2019dualdice, nachum2019algaedice, lee2021optidice,mao2024diffusion}. However, these approaches require the solution of a task-specific optimization, which is not in accordance with the BFM-style zero-shot adaptation. Thus, we state the following question:

%Importantly, the distribution shift is not the only obstacle. Recent evidence suggests that even when a BFM contains strong behaviors in its latent policy space, the \emph{inference procedure itself} can return policy representations that are far from optimal for the task \citep{sikchi2025fast,bobrin2025zero}. In other words, the gap may come from \emph{how} adaptation is performed, not only from what the pretrained model can express. This motivates an adaptation mechanism that is both \emph{fast} and \emph{conservative}: it should search for task-relevant behaviors while explicitly preferring policies whose visitation remains supported by the pretraining data. 

\begin{calloutmotivation}
    \emph{\textbf{Motivation}}: Can we estimate optimal stationary occupancy ratios for a whole family of BFM-induced policies without per-task online optimization, enabling fast test-time adaptation while avoiding the pathologies of off-policy learning?
\end{calloutmotivation}

We answer this question with a theoretical insight in Eq.\eqref{eq:w_by_sm}: the Forward-Backward (FB) framework \citep{touati2021learning} for estimating Successor Measures (SMs) \citep{dayan1993improving} yields a bridge from behavior representations to \emph{occupancy correction}. Concretely, the bilinear FB structure implicitly implies a low-rank parameterization of stationary density ratios \emph{across the latent policy family} induced by the BFM, making it possible to compute $w_\pi$ in a zero-shot manner from pretrained components (see \S\ref{sec:method}).

Building on this connection, we propose \emph{Zero-Shot Off-policy Learning} (ZOL), a simple test-time procedure for \emph{occupancy-aware fast adaptation}, without changing the training procedure of the BFMs. Given a small amount of task information (e.g., reward-labeled samples or sparse goals), ZOL uses the FB-induced ratio to (i) reweight the supervision toward dataset-supported regions and (ii) infer a task latent that selects optimal behaviors consistent with the new objective while remaining grounded in the pretraining distribution. This targets the inference gap: ZOL biases adaptation away from latents that \emph{look good under extrapolated values} but induce unsupported visitation, and toward latents that are \emph{task-relevant} and \emph{dataset-consistent}. As a result, ZOL mitigates overestimation and extrapolation errors during zero-shot adaptation, requires \underline{no additional training and fully offline}, and comes with a theoretical justification.

Importantly, recent evidence suggests that even when a BFM contains strong behaviors in its latent policy space, the inference procedure itself can return policy representations that are far from optimal for the task \citep{sikchi2025fast,bobrin2025zero}. In other words, the performance gap may arise from \emph{how} adaptation is performed, not only from \emph{what} the pretrained model can express. Our method also contributes to addressing this problem by providing improved adaptation across diverse tasks while explicitly preferring policies whose visitation distributions remain supported by the pretraining data.
% \begin{figure*}[h!]
%   \centering
%   \includegraphics[width=\textwidth]{Images/exorl_improvement_panels.pdf}
%   \caption{ZOL improvements over other BFMs on ExORL (DMC-state) benchmark. For full results, refer to Tbl.~\ref{table:exorl_merged}. Despite not having access to the environment, ZOL performs better, especially on long horizon tasks.}
%   \vspace{-3mm}
%   \label{fig:exorl-panels}
% \end{figure*}
% \begin{figure*}[h!]
%   \centering
%   \includegraphics[width=\textwidth]{Images/exorl_improvement_panels_with_lola_rela.pdf}
%   \caption{ZOL improvements over other BFMs on ExORL (DMC-state) benchmark. For full results we refer to Tbl.~\ref{table:exorl_merged}. Despite not having access to the environment (compared to LoLA/ReLA), ZOL obtains higher performance, especially on long horizon, sparse pointmass.}
%   \vspace{-3mm}
%   \label{fig:exorl-panels}
% \end{figure*}

We evaluated ZOL on a diverse suite of zero-shot adaptation settings, including long horizon goal-conditioned control on OGBench, downstream tasks on ExORL, and MoCap imitation on AMASS, complemented by a simple 2D toy domain that isolates the core mechanism as a proof of concept (\S\ref{sec:2d_experiments}). Across all benchmarks, ZOL yields consistent gains in robustness and reliability over existing zero-shot adaptation methods and state-of-the-art BFMs (\S\ref{sec:experiments}).

\section{Background}
\label{sec:background} 
\subsection{Markov Decision Processes}
 Consider a discounted Markov decision process (MDP) $\mathcal{M}=\langle S,A,P,R,\gamma,\rho_0\rangle$, where $S$ is the state space, $A$ is the action space, $P(\cdot \mid s,a)\in\Delta(S)$ is the transition kernel, $R(s,a)\in\mathbb{R}$ is the reward function, $\gamma\in[0,1)$ is the discount factor and $\rho_0\in\Delta(S)$ is the initial state distribution. If convenient, $\rho_0$ can be extended to an initial state-action distribution via $\rho_0(s,a)=\rho_0(s)\pi(a\mid s)$.
A (stationary) policy $\pi(\cdot\mid s)\in\Delta(A)$ induces a trajectory distribution over $(s_t,a_t)_{t\ge 0}$ through $a_t\sim\pi(\cdot\mid s_t)$ and $s_{t+1}\sim P(\cdot\mid s_t,a_t)$. A central object in our analysis is the \emph{discounted stationary occupancy measure} of $\pi$:
\vspace{-2mm}
\begin{equation}
d^\pi(s,a)
:=
(1-\gamma)\sum_{t=0}^{\infty}\gamma^t\,
\Pr\!\left(s_t=s,\;a_t=a \mid \pi\right)
\label{eq:occupancy_def}
\end{equation}
Intuitively, $d^\pi(s,a)$ is the discounted frequency with which the pair $(s,a)$ is visited when starting from $\rho_0$ and following $\pi$. By construction, $d^\pi$ is a probability distribution and sums up to 1 over state-action pairs.

\subsection{Offline RL and Stationary Density Ratios}
In offline RL, we do not interact with $\mathcal{M}$ directly. Instead, we are given a static dataset
$\mathcal{D}=\{(s_i,a_i,r_i,s'_i)\}_{i=1}^{N}$
collected by an unknown behavior policy $\beta$ \citep{levine2020offline}. We view $\mathcal{D}$ as being drawn from the behavior-induced occupancy $d^\beta$, i.e., $(s_i,a_i)\sim d^\beta$ and $s'_i\sim P(\cdot\mid s_i,a_i)$ (with rewards obtained from $R$ when available).\footnote{This i.i.d.\ assumption is standard for analysis; in practice, $\mathcal{D}$ may be temporally correlated, and $d^\beta$ should be interpreted as the dataset's empirical discounted visitation distribution.} The fundamental difficulty in off-policy methods is in \emph{distribution shift}: we seek to evaluate or optimize a target policy $\pi$ whose occupancy $d^\pi$ may differ substantially from $d^\beta$. A standard tool to relate these distributions is the \emph{stationary density ratio} (a.k.a. stationary distribution correction):
\begin{equation}
    w_{\pi/\beta}(s,a) := \frac{d^\pi(s,a)}{d^\beta(s,a)}
\label{eq:density_ratio_def}
\end{equation}
When $d^\beta(s,a)=0$, the ratio is undefined; throughout, we adopt the convention $w_{\pi/\beta}(s,a)=0$ in this case, which enforces a \emph{support constraint} that forbids extrapolation beyond the dataset. While classical methods estimate $w_{\pi/\beta}$ for a single fixed $\pi$, more recent approaches aim to learn ratio representations that generalize across a family of target policies \citep{lee2021optidice}, a viewpoint that naturally aligns with BFMs, where adaptation amounts to selecting (or conditioning on) a policy from a large pretrained family.

\subsection{Forward-Backward Successor Representations}
\label{sec:background_fb}
The \emph{successor measure} (SM) \citep{dayan1993improving,blier2021learning}, denoted $M^\pi$, characterizes the discounted future visitation induced by a policy $\pi$ starting from $(s_0,a_0)$:
\begin{equation}
M^\pi(s_0,a_0,s,a)
:=\sum_{t=0}^{\infty}\gamma^t\,
\Pr\!\bigl(s_t=s,\;a_t=a \,\bigm|\, s_0,a_0,\pi\bigr)
\label{eq:sm_def}
\end{equation}
Unlike the value function, which collapses future outcomes into a scalar, $M^\pi$ is reward-free and captures \emph{where} the agent is expected to go in the future. 

Let $d^\beta$ be a reference distribution (e.g., the dataset occupancy). We often express the SM as a density $m^\pi$ with respect to $d^\beta$:
\begin{equation}
M^\pi(s_0,a_0,s,a)=m^\pi(s_0,a_0,s,a)\,d^\beta(s,a)
\label{eq:sm_density}
\end{equation}
Here, $m^\pi(s_0,a_0,s,a)$ can be interpreted as a discounted likelihood ratio of visiting $(s,a)$ in the future from $(s_0,a_0)$ under $\pi$, normalized by the prevalence of $(s,a)$ under the reference $d^\beta$. Because the SM separates dynamics from rewards, the $Q$-function admits a linear form:
\begin{equation}
Q_r^\pi(s_0,a_0)
=\int M^\pi(s_0,a_0,s,a)\,r(s,a)\, \mathrm{d}s\,\mathrm{d}a
\label{eq:q_sm}
\end{equation}
(with the integral replaced by a sum in discrete spaces).

\textbf{Forward--Backward factorization.}
Successor measures are particularly useful for zero-shot transfer: once $M^\pi$ (or a representation of it) is learned, it can be reused across tasks by changing only the reward. The Forward--Backward (FB) framework \citep{touati2021learning} makes this tractable by approximating the successor density with a low-rank factorization conditioned on a latent variable $z\in\mathbb{R}^d$. Consider a family of policies $\{\pi_z\}$ indexed by $z$, and two learned embeddings:
a forward embedding $F:S\times A\times \mathbb{R}^d\to\mathbb{R}^d$
and a backward embedding $B:S\times A\to\mathbb{R}^d$.
FB approximates
\begin{equation}
m^{\pi_z}(s_0,a_0,s,a)\approx F(s_0,a_0,z)^\top B(s,a)
\label{eq:fb_factor}
\end{equation}
If the task reward can be represented linearly in the backward features, $r(s,a)\approx B(s,a)^\top z$, then combining Eqs.\eqref{eq:q_sm}-\eqref{eq:fb_factor} yields the familiar FB score:
$
Q_r^{\pi_z}(s_0,a_0)\approx F(s_0,a_0,z)^\top z
% \label{eq:fb_q}
$
Consequently, a greedy policy for task latent $z$ is obtained by maximizing this score over actions:
\begin{equation}
    \pi_z(s)\approx \arg\max_{a}\; F(s,a,z)^\top z
    \label{eq:fb_pi}
\end{equation}
\textbf{Interpretation.}
Intuitively, $F(\cdot,\cdot,z)$ predicts the expected discounted accumulation of backward features along trajectories induced by $\pi_z$ (successor features), while $B(\cdot,\cdot)$ provides a task-agnostic basis over state--action pairs \citep{barreto2017successor,borsa2018universal,agarwal2025unified}. The latent $z$ serves as a compact task embedding: it specifies both (approximately) the reward and the corresponding policy through Eq.\eqref{eq:fb_pi}. During training, sampling different $z$ values encourages diverse behaviors by exploring different directions in this feature space.

\textbf{Learning.}
The FB embeddings are trained by enforcing a Bellman fixed-point for the successor density,
$m^{\pi_z}=\mathbf{I}+\gamma P^{\pi_z}m^{\pi_z}$, together with the factorization in Eq.~\eqref{eq:fb_factor}, via temporal-difference updates \citep{touati2021learning}. Additional loss is also added to enforce $B^TB=I$ to enhance expressivity and avoid trivial solutions.
\begin{figure*}[t!]
    \centering
    \includegraphics[width=1\linewidth]{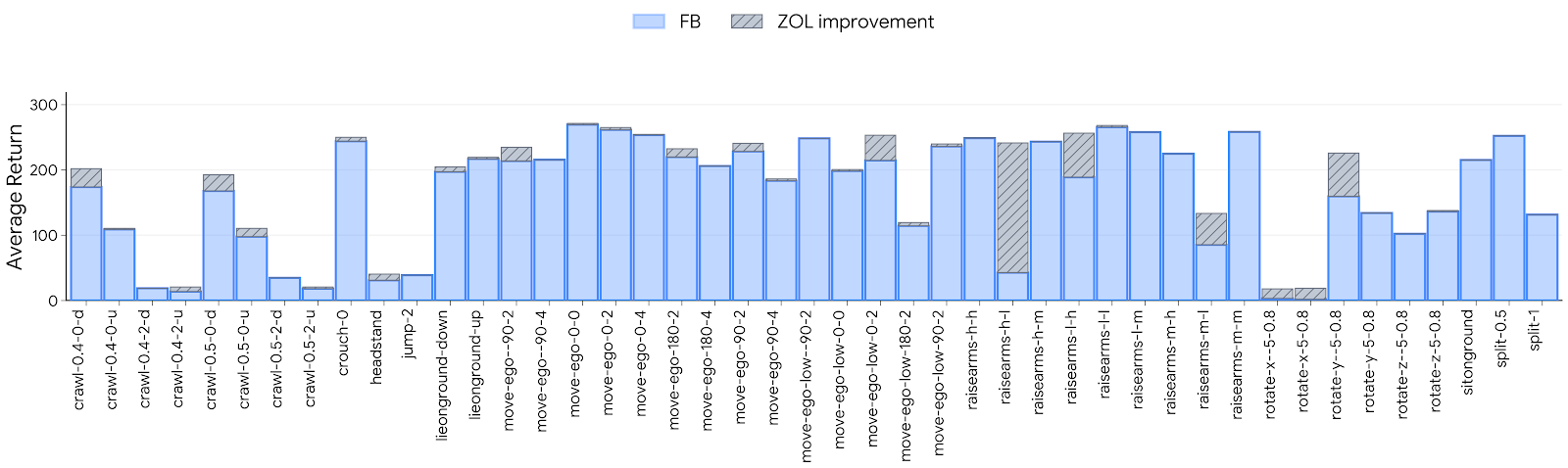}
    \vspace{-10pt}
    \caption{
    \footnotesize
    \textbf{\texttt{HumEnv} benchmark.}
    Performance improvement of ZOL over FB-CPR on SMPL Humanoid, regularized to diverse realistic motion capture trajectories from AMASS dataset.
    }
    \vspace{-5mm}
    \label{fig:humenv}
\end{figure*}

\textbf{Test-time Inference \& Policy Extraction.}
At test time, given a new task specified by a reward $r_{\text{new}}(s,a)$, the goal is to infer a latent $z^\star$ that maximizes return. If $r_{\text{new}}$ can be approximated by the learned basis, e.g.,
$r_{\text{new}}(s,a)\approx B(s,a)^\top \mathbf{w}$,
one can estimate a corresponding task embedding from the dataset:
\begin{equation}
z
= \mathbb{E}_{(s,a)\sim\mathcal{D}}\!\bigl[B(s,a)\,r_{\text{new}}(s,a)\bigr].
\label{eq:z_infer}
\end{equation}
Recent studies \citep{sikchi2025fast,bobrin2025zero} report that the latent $z$ inferred by Eq.~\eqref{eq:z_infer} can be systematically \emph{suboptimal}. The issue is due to approximation errors during unsupervised training, which can steer $z$ toward a poor region of the latent space. This motivates augmenting zero-shot inference with an additional adaptation mechanism that can reliably \emph{navigate} the latent policy family to uncover better task policies.

\section{Method}
\label{sec:method}
\vspace{-1mm}

In the current section, we show the exact connection between DICE correction and Forward-Backward successor representations. We show how occupancy ratios can be obtained directly from pretrained FB, yielding a simple low-rank estimator that can be evaluated \emph{only on dataset samples}.

For any policy $\pi$, the discounted stationary occupancy $d^\pi$ can be written as an average of the successor measure over the initial distribution:
\begin{equation}
    d^\pi(s,a)
    =
    (1-\gamma)\,
    \mathbb{E}_{(s_0,a_0)\sim \rho_0}
    \left[
        M^\pi(s_0,a_0,s,a)
    \right]\nonumber
\end{equation}
Using the density decomposition in \eqref{eq:sm_density}, i.e., $M^\pi(s_0,a_0,s,a)=m^\pi(s_0,a_0,s,a)\,d^\beta(s,a)$, we obtain
\vspace{-1mm}
\begin{equation}
    d^\pi(s,a)
    =
    (1-\gamma)\,
    \mathbb{E}_{(s_0,a_0)\sim \rho_0}
    \left[
        m^\pi(s_0,a_0,s,a)
    \right]d^\beta(s,a).\nonumber
\end{equation}
Dividing by $d^\beta(s,a)$ yields the exact identity
\begin{equation}
    w_{\pi/\beta}(s,a)
    \hspace{-1mm}=\hspace{-1mm}\frac{(1\hspace{-0.5mm}-\hspace{-0.5mm}\gamma)\,
    \mathbb{E}_{(s_0,a_0)\sim \rho_0}
    \left[
        m^\pi(s_0,a_0,s,a)
    \right]\cancel{d^\beta(s,a)}}{\cancel{d^\beta(s,a)}}
    \nonumber
    \label{eq:w_by_sm}
\end{equation}
\begin{equation}
   \hspace{-2mm} w_{\pi/\beta}(s,a)
    =
    (1-\gamma)\,
    \mathbb{E}_{(s_0,a_0)\sim \rho_0}
    \left[
        m^\pi(s_0,a_0,s,a)
    \right]
    \label{eq:ratio-exact}
\end{equation}
Thus, stationary distribution correction is the $(1-\gamma)$-scaled expected successor \emph{density} under $\pi$, averaged over starting pairs $(s_0,a_0)\sim\rho_0$.

With FB, for a latent-indexed policy family $\{\pi_z\}$, the successor density is approximated by \eqref{eq:fb_factor}:
$m^{\pi_z}(s_0,a_0,s,a)\approx F(s_0,a_0,z)^\top B(s,a)$.
Taking the expectation over $(s_0,a_0)\sim\rho_0$ gives
\begin{equation}
    \mathbb{E}_{\rho_0}
    \bigl[
        m^{\pi_z}(s_0,a_0,s,a)
    \bigr]
    \approx
    \mathbb{E}_{\rho_0}
    \bigl[
        F(s_0,a_0,z)^\top B(s,a)
    \bigr],\nonumber
\end{equation}
and since $B(s,a)$ depends only on $(s,a)$, it can be pulled outside the expectation. Define the \emph{forward expectation vector}
\begin{equation}
    W_{\pi_z}
    :=
    \mathbb{E}_{(s_0,a_0)\sim \rho_0}
    \left[
        F(s_0,a_0,z)
    \right]
    \in \mathbb{R}^d \nonumber
\end{equation}
Substituting into \eqref{eq:ratio-exact} yields a compact low-rank estimator of the stationary density ratio:
\begin{equation}
\boxed{
    w_{\pi^{*}_{z}/\beta}(s,a) = \frac{d^{\pi^*_z} (s, a)}{d^\beta (s, a)}
    \;\approx\;
    (1-\gamma)\,
    W_{\pi_z}^\top B(s,a)
    }
    \label{eq:ratio-final}
\end{equation}
That is, for a given task/policy latent $z$, the ratio is linear in the backward embedding $B(s,a)$, with coefficients given by the expected forward embedding $W_{\pi_z}$. Intuitively, $W_{\pi_z}$ summarizes how the latent policy family shifts visitation from the dataset, while $B$ provides the state-action basis on which this shift is expressed. Moreover, Eq. \eqref{eq:ratio-final} aims to find an optimal correction ratio between unknown, optimal policy for task $z$ and prior data $d^\beta$.

Crucially, Eq.\eqref{eq:ratio-final} is only ever evaluated on in support $(s,a)\sim d^\beta$. This avoids extrapolating $w_{\pi/\beta}$ to unseen regions and does not require access to the behavior policy $\beta$ itself, only its induced occupancy through the dataset and the learned embedding $B(s,a)$.

\begin{calloutimportant}
    \emph{\textbf{Takeaway:}} Forward--Backward representations implicitly contain the distribution-correction weights $w_{\pi/\beta}$, enabling a \underline{training-free} approximation of occupancy-corrected (and, in our setting, near-optimal) adaptation for the task latents $z$ at test time.
\end{calloutimportant}

\subsection{Algorithm}
Our method consists of three steps: (i) training the forward-backward representations, (ii) inferring a task-specific latent vector $z$, and (iii) optimizing this vector using our objective to find a better value for $z$. First, we train the forward ($F_\theta$) and backward ($B_\psi$) embedding networks by minimizing the FB temporal-difference objective. Given a policy $\pi(\cdot\mid s,z)$ and a latent distribution $\nu(z)$, we minimize the loss:
\begin{equation}
\label{eq:fb_loss}
\delta \triangleq F_\theta(s,a,z)^\top B_\psi(s^+)-\gamma\,\bar F(s',a',z)^\top \bar B(s^+),\nonumber
\end{equation}
\begin{equation}
\mathcal{L}_{\text{FB}}=\mathbb{E}\!\left[\delta^2-2\,F_\theta(s,a,z)^\top B_\psi(s')\right]\nonumber
\end{equation}

Second, for a new task $T$ with reward $r_T$ and dataset $\mathcal{D}$, we infer a task-specific embedding $c_T$. This embedding aggregates reward information via the backward features and serves as the initialization for $z$:
\begin{equation}
    \label{eq:task_embed}
    c_T := \mathbb{E}_{(s,a)\sim \mathcal{D}} \bigl[ B_\psi(s,a)\, r_T(s,a) \bigr].
\end{equation}
Finally, we optimize the latent variable $z$ to maximize the task return. Using the FB approximation, our optimization objective function is given by \eqref{eq:ratio-final}.
We perform a gradient ascent (using Adam optimizer~\cite{kingma2014adam} in our experiments) in the space of $z$. We estimate the successor feature expectation $W_{\pi_z} = \mathbb{E}_{s_0 \sim \rho, \pi_z}[F(s,a,z)]$ via Monte Carlo sampling. In each optimization step, we sample a batch of start states $s_0$ from the environment, execute the policy $\pi(\cdot|s_0, z)$, and average the resulting embeddings $F$. In practice, to ensure stable optimization, we implement:

\textbf{Positive Density Ratios.}
To guarantee valid density ratios, we use the Softplus activation $\sigma(x) = \log(1+e^x)$ in Eq.\eqref{eq:ratio-final}. The weights are also normalized over the batch $\mathbb{E}[w] \approx 1$.

\textbf{Regularized Objective.}
We maximize a penalized objective $\mathcal{J}_{\text{total}} = J_{\text{ret}} - \lambda_1 \chi^2 - \lambda_2 \mathcal{L}_{\text{trust}}$. 
Here, $J_{\text{ret}}$ is the weighted return using centered rewards ($r \leftarrow r - \bar{r}$) to prevent trivial solutions. 
The $\chi^2$ term, defined as $\mathbb{E}[(w-1)^2]$, penalizes high variance in the importance weights. 
The trust region term $\mathcal{L}_{\text{trust}} = \| \text{proj}(z) - \text{proj}(z_{\text{init}}) \|^2$ ensures that the latent search stays within the region of initial task.
We also apply correction ration clipping to prevent numerical instability. We provide ablation study over the all of these parameters in (\S \ref{sec:ablation}). For the theoretical considerations which support above choices, we refer to Appendix~\ref{app:theory}.

In total, we use this ratio to estimate the return of $\pi_z$ without online rollouts via regularized weighted-return objective:
\begin{equation}
J_{\mathrm{total}}(z)
=
J(\pi_z)
-
\lambda_1 \chi^2(z)
-
\lambda_2 \mathcal{L}_{\mathrm{trust}}(z)
\end{equation}
where
$
J_{\mathrm{ret}}(\pi_z)
=
\mathbb{E}_{(s,a,r)\sim d^\beta}
\left[
w_{\pi_z/\beta}(s,a)
\left(r(s,a)-\bar{r}\right)
\right].
$

Thus, we do not maximize the density ratio itself. We maximize an offline estimate of return with the ratio correcting the mismatch between $d^{\pi_z}$ and $d^\beta$. The method is conservative, evaluating only on $d^\beta$ support, with $
w_{\pi_z/\beta}(s,a)=0
\quad \text{when} \quad
d^\beta(s,a)=0
$. For the exact ratio,$
\mathbb{E}_{d^\beta}
\left[
w_{\pi_z/\beta}
\right]=1$, so the goal is not to make the ratio uniformly small, but to use it as a valid occupancy correction. Stability of search over $z$ is ensured via positivity, normalization, clipping, and trust-regions. We will revise Sec. 3.1 to make this clear.
\begin{figure*}[h!]
    \centering
    \includegraphics[width=1\linewidth]{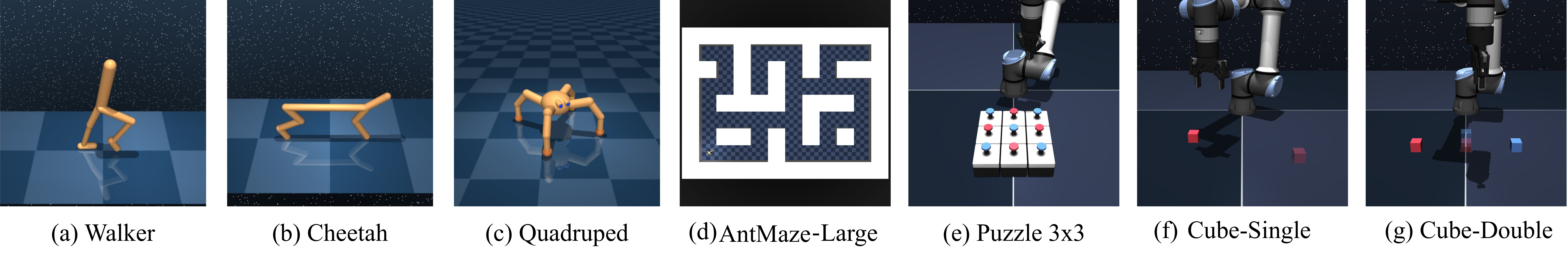}
    % \vspace{-20pt}
    \caption{
    \footnotesize
    \textbf{Evaluation Benchmarks.}
    We evaluate ZOL on seven robotic locomotion and manipulation environments across $5$ different tasks in unsupervised regime, which vary in complexity, data coverage, and reward types.
    }
    % \vspace{-5mm}
    \label{fig:envs}
\end{figure*}
\section{Related Work}
\label{sec:lola}
Stationary occupancy density ratios are at the core of many offline RL algorithms. For example, in DualDICE \cite{nachum2019dualdice} and OptiDICE \cite{lee2021optidice}, $w_{\pi/\beta}$ is estimated by solving a saddle-point optimization problem. Diffusion-DICE \citep{mao2024diffusion} views distribution correction in the context of iterative policy improvement via diffusion guidance. All of those methods involve solving high-dimensional optimization problems over function classes that directly map $(s,a)$ to a scalar density ratio. However, these methods are not applicable for zero-shot settings.

\textbf{ReLA} \citep{sikchi2025fast} performs adaptation online by
\emph{reward projection error} $r-\phi^\top z_r$ via the decomposition
\begin{equation*}
\label{eq:rela-decomp}
  Q^{\pi_z}_r(s,a)=
  F(s,a,z)^\top z_r+Q^{\pi_z}_{\,r-\phi^\top z_r}(s,a)
\end{equation*} learning a residual critic $Q_{\text{res}}(s,a;\theta)\approx
Q^{\pi_z}_{\,r-\phi^\top z_r}(s,a)$ while keeping the base term
$\psi(\cdot)^\top z_r$ frozen. Latent adaptation starts from $z=z_r$ and
updates $z$ using
\begin{align*}
    % \label{eq:rela-update}
    % \begin{aligned}
        \nabla_z \;
        \mathbb{E}_{s\sim D_{\text{online}}}
        \Big[
        &F(s,\pi_z(s),z_r)^\top z_r
        + Q_{\text{res}}(s,\pi_z(s);\theta)
        \Big]
    % \end{aligned}
\end{align*}
ReLA requires environment interaction to populate
$D_{\text{online}}$ and depends on \emph{rollout-collection} to approximate the on-policy improvement direction.%; without fresh on-policy (or close-to-on-policy) trajectories, the update is an off-policy surrogate and can be biased by distribution shift and critic error.

\textbf{LoLA} \citep{sikchi2025fast} optimizes a latent
distribution $z\sim\mathcal{N}(\mu,\sigma)$ (initialized at $z_r$) using
fixed-horizon on-policy rollouts and a frozen terminal bootstrap:
\begin{align*}
% \label{eq:lola-return}
% \begin{aligned}
R_n(s_0,z)
=
\sum_{t=0}^{n-1}\gamma^t r(s_{t+1})
+
\gamma^n\,
F\!\big(s_{n+1},\pi_z(s_{n+1}),z\big)^\top z_r
% \end{aligned}
\end{align*}
A REINFORCE-style~\cite{williams1992simple} estimator (often with a baseline $b$) gives
\begin{align*}
% \label{eq:lola-grad}
% \begin{aligned}
\nabla_{\mu,\sigma}\;
\mathbb{E}\!\left[R_n(s_0,z)\right]
&\approx
\mathbb{E}\!\left[
\big(R_n(s_0,z)-b\big)\;
\nabla_{\mu,\sigma}\log p_{\mu,\sigma}(z)
\right]
% \end{aligned}
\end{align*}
Approximating the \emph{true on-policy} gradient direction fundamentally
depends on Monte-Carlo rollouts and needs to manage a tradeoff: larger $n$ is closer to on-policy, but increases interaction cost and variance. LoLA also assumes the ability to reset the environment to replay-buffer.

\begin{calloutimportant}
\textbf{\emph{Takeaway:}} Classic DICE methods are not applicable for zero-shot settings. ReLA/LoLA require test-time environment access and are based on \emph{online}, on-policy rollouts, possessing a trade-off.
\end{calloutimportant}
%(explicitMC in LoLA; online replay plus critic approximation in ReLA). 
In contrast, ZOL performs training-free, interaction-free test-time adaptation by applying distribution/occupancy correction. Our correction recovers the on-policy objective/gradient \emph{without} requiring MC rollouts, yielding a behavior-supported update that mitigates OOD \emph{greedification}. %Our method leverages FB-induced structure to express the target-policy objective/gradient in terms of behavior data. 

% \begin{figure*}[h!]
%     \centering
%     \includegraphics[width=1\linewidth]{Images/bench_envs.pdf}
%     \vspace{-20pt}
%     \caption{
%     \footnotesize
%     \textbf{Evaluation Benchmarks.}
%     We evaluate ZOL on seven robotic locomotion and manipulation environments across $5$ different tasks on each in unsupervised regime, which vary in complexity, data coverage and reward types.
%     }
%     \vspace{-5pt}
%     \label{fig:envs}
% \end{figure*}
\section{Sub-optimality of Zero-Shot Adaptation}
\label{sec:2d_experiments}
\begin{figure}[h!]
    \centering
    \includegraphics[width=0.91\linewidth]{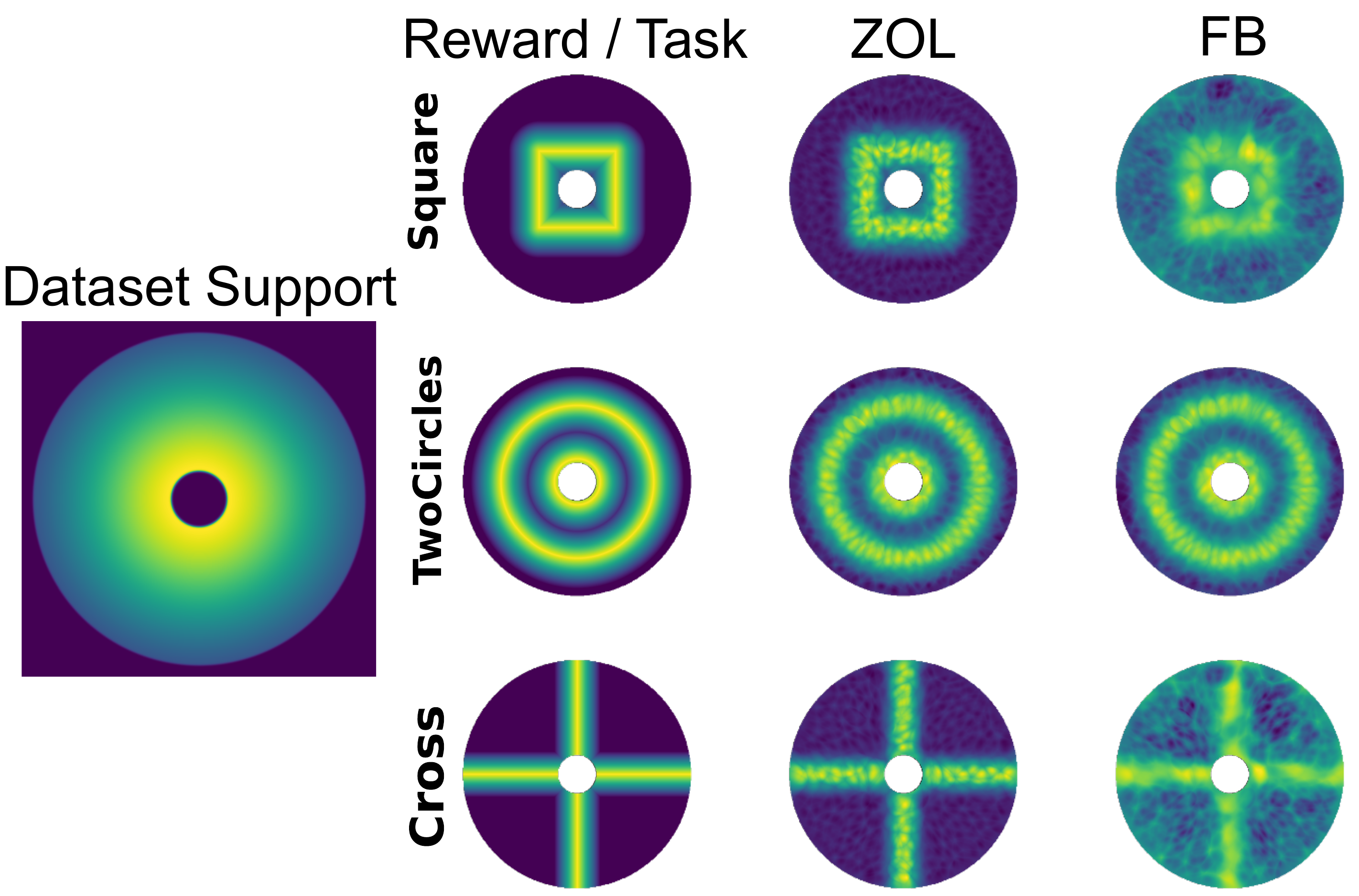}
        \caption{The offline dataset support (left), and the corresponding heatmaps of the reconstructed rewards from latents obtained by ZOL (third column) or FB (fourth column). Colors represent rewards (higher $\rightarrow$ brighter).}
    % \vspace{-5mm}
    \label{fig:toy-2d}
\end{figure}

We now construct a minimal 2D example with the goal to isolate and visualize a failure mode of \emph{inference-time} latent selection in FB( Eq.~\eqref{eq:z_infer}).
States are points $s=(x,y)\in\mathbb{R}^2$ constrained to an \emph{donut} with inner radius $rad_{\min}=0.25$ and outer radius $rad_{\max}=1.5$.
We pretrain FB purely from an offline transitions dataset on this donut, collected by a random-walk behavior policy (uniform actions clipped to $[-0.1,0.1]^2$) and \emph{without} any task reward labels.
The learned backward map $B$ and forward map $F(s,a, z)$ therefore summarize \emph{unsupervised} structure of the dynamics under the dataset across various tasks encodings $z$.

\textbf{Offline support distribution.}
A key ingredient in this toy is that the offline data provides a \emph{biased support} over states.
Moreover, we enforce the dataset support to have a radially-decaying density restricted to the donut:
\begin{equation*}
    d^{\beta}(s)\ \propto\ \exp\!\Big(-\tfrac{\|s\|_2^2}{2\sigma^2}\Big)\cdot \mathbbm{1}\{rad_{\min}\le \|s\|_2 \le rad_{\max}\}
    \label{eq:toy-support}
\end{equation*}
so that most mass lies closer to the inner region of the annulus (leftmost panel in Fig.~\ref{fig:toy-2d}).
Intuitively, this mimics the common offline-RL regime where the dataset covers some region densely while only sparsely covering other regions that may be crucial for downstream tasks.

\textbf{Downstream Tasks.}
At test time, we consider three reward functions, acting as different tasks, defined over the donut: \textsc{Square}, \textsc{TwoCircles}, and \textsc{Cross} (second column in Fig.~\ref{fig:toy-2d}). For each task, we visualize the reconstructed rewards for each state as $r_z (s) = B(s)z$, where $z$ is obtained either from ZOL latent optimization procedure or through FB.

Given a batch of offline states $\{s_i\}_{i=1}^N\sim d^\beta$ and corresponding reward labels $\{r_i=r(s_i)\}$ for certain task, FB chooses $z$ which is dominated by explaining $r(s)$ \emph{where data density is high}, even if the task-critical structure lies in low-support regions.
Importantly, this is not a representational limitation of FB and failure stems from the \emph{suboptimality of the inference step under support bias}, i.e., selecting $z$ using an objective that is dominated by $d^\beta$.

Our approach (\textsc{ZOL}) replaces the purely support-conditioned latent selection with an inference objective that explicitly reasons about the \emph{state visitation induced by the candidate latent}.
Concretely, for a candidate $z$ we estimate a (successor) occupancy score over states using the FB model, and then form normalized weights (stationary correction) that upweight states that are more likely under the policy induced by $z$.
% As a consequence, $\hat r_z$ becomes \emph{overly smooth} and biased toward radially-symmetric features that are well supported by the dataset.
% This is visible in Fig.~\ref{fig:toy-2d} (rightmost column FB). For all tasks, the empirical occupancy measure of the FB is blurred, which means that most of the time the policy gets stuck in local suboptimal regions. 
% Importantly, this is not a representational limitation of FB per se: 
% the failure stems from the \emph{suboptimality of the inference step under support bias}, i.e., selecting $z$ using an objective dominated by $d^\beta$.

\textbf{Takeaway.}
Fig.~\ref{fig:toy-2d} shows that \textsc{ZOL} reliably reconstructs the intended task structure (third column, ZOL) across all three rewards, recovering sharp reward structure.
In contrast, vanilla FB inference (fourth column) systematically collapses toward smoother, support-biased explanations, failing to reconstruct task representation properly.
Overall, this toy highlights the core insight: even when the unsupervised representation is sufficiently expressive, \emph{zero-shot adaptation can be suboptimal if latent inference is performed solely through empirical correlations on a biased offline support}.
By making inference self-consistent with the candidate policy's induced occupancy, \textsc{ZOL} mitigates this failure mode and yields substantially better zero-shot adaptation.
% In contrast, Fig.~\ref{fig:toy-2d} shows that \textsc{ZOL} reliably reconstructs the ground truth empirical occupancy measure over most rewarding regions across all three tasks, while FB inference systematically collapses toward smoother, support-biased explanations. Since ZOL optimizes the importance-weighted return (Eq.~\eqref{eq:ratio-final}), weight correction additionally counteracts the dataset bias: it up-weights rewards in task-critical, low-density regions. 
% Overall, even when the unsupervised representation is sufficiently expressive, \emph{zero-shot adaptation can be suboptimal if latent inference is performed solely through empirical correlations on a biased offline support}.
% By making inference self-consistent with the candidate policy's induced occupancy, \textsc{ZOL} mitigates this failure mode and yields substantially better zero-shot adaptation.

\begin{table}[t]
\caption{Average results across $5$ tasks on the OGBench comparing ZOL against other approaches FB and on-policy (LoLA/ReLA).}
\label{table:ogbench_relalolazol}
%\vskip 0.08in
\centering
\begin{scriptsize}
\begin{sc}
\setlength{\tabcolsep}{4.0pt}
\renewcommand{\arraystretch}{0.95}
\resizebox{\linewidth}{!}{%
\begin{tabular}{lcccc}
\toprule
Task & ReLA & LoLA & FB & \cellcolor{highlight!50}\textbf{ZOL} \\
\midrule
antmaze-l-navigate  & \cellcolor{highlight!50}\bestmstd{0.77}{0.1} & \mstd{0.6}{0.2} & \mstd{0.68}{0.2} & \mstd{0.63}{0.1} \\
antmaze-l-stitch  & \mstd{0.2}{0.2} & \mstd{0.3}{0.2} & \mstd{0.36}{0.2} & \cellcolor{highlight!50}\bestmstd{0.3}{0.1} \\
antmaze-m-stitch  & \mstd{0.6}{0.3} & \mstd{0.7}{0.3} & \mstd{0.69}{0.3} & \cellcolor{highlight!50}\bestmstd{0.73}{0.2} \\
antmaze-m-explore  & \mstd{0.4}{0.3} & \mstd{0.6}{0.3} & \mstd{0.62}{0.2} & \cellcolor{highlight!50}\bestmstd{0.67}{0.3} \\
antmaze-m-navigate & \mstd{0.8}{0.2} & \mstd{0.9}{0.1} & \cellcolor{highlight!50}\bestmstd{0.95}{0.0} & \mstd{0.91}{0.1} \\
\midrule
cube-double-play        & \cellcolor{highlight!50}\bestmstd{0.03}{0.0} & \mstd{0.0}{0.0} & \mstd{0.0}{0.0} & \mstd{0.01}{0.0} \\
cube-single-play        & \mstd{0.26}{0.1} & \mstd{0.3}{0.1} & \mstd{0.31}{0.1} & \cellcolor{highlight!50}\bestmstd{0.34}{0.0} \\
\midrule
puzzle-3x3-play         & \mstd{0.07}{0.1} & \mstd{0.0}{0.0} & \mstd{0.0}{0.0} & \cellcolor{highlight!50}\bestmstd{0.14}{0.2} \\
\midrule
\rowcolor{gray!10}
\textbf{Avg.} & \textbf{0.39} & \textbf{0.42} & \bestval{0.45} & \cellcolor{highlight}\textbf{0.47} \\
\bottomrule
\end{tabular}%
}
\vskip -0.3in
\end{sc}
\end{scriptsize}

\vskip -0.06in
\end{table}
\section{Experiments}
\label{sec:experiments}
% \begin{figure*}[h!]
%     \centering
%     \includegraphics[width=1\linewidth]{Images/bench_envs.pdf}
%     % \vspace{-20pt}
%     \caption{
%     \footnotesize
%     \textbf{Evaluation Benchmarks.}
%     We evaluate ZOL on seven robotic locomotion and manipulation environments across $5$ different tasks in unsupervised regime, which vary in complexity, data coverage, and reward types.
%     }
%     % \vspace{-5mm}
%     \label{fig:envs}
% \end{figure*}
We evaluate our approach on three complementary benchmarks designed to assess test-time adaptation under varying degrees of reward sparsity, task horizons, and state-space dimensionality. These environments probe an agent's capability to leverage task-agnostic pretraining data to solve specific downstream tasks.

% \textbf{DMC ExoRL} \citep{yarats2022dont}: An offline RL benchmark using task-agnostic, exploratory data from the DeepMind Control Suite. It evaluates an agent's ability to learn optimal policies from diverse, non-expert trajectories collected via unsupervised exploration methods.
    
% \textbf{OGBench} \citep{park2025ogbench}: A goal-conditioned offline RL suite centered on long-horizon stitching tasks. It spans navigation and manipulation, requiring agents to reach target in sparse-reward environments with complex geometry.
    
% \textbf{SMPL Humanoid} \citep{tirinzoni2024metamotivo}: A high-dimensional motor control benchmark utilizing the AMASS \citep{mahmood2019amass} motion capture dataset and SMPL body model. It tests the agent's capacity to imitate realistic, physically plausible human motions across a variety of sequences.

For the baselines, we take most recent zero-shot RL approaches: FB \citep{touati2021learning}, HILP \citep{park2024foundation}, TD-JEPA \citep{bagatella2025td}. For other baselines, which focus on the same setting of fast adaptation over pretrained BFM at test time, we take LoLA and ReLA \citep{sikchi2025fast}, which described in \S\ref{sec:lola}. For the Humanoid, we compare to pretrained FB-CPR \cite{tirinzoni2024metamotivo}. Detailed descriptions of baselines and hyperparameter configurations are provided in Appendices~\ref{app:literature} and \ref{app:experiments}.

\subsection{ExORL DeepMind Control}
\begin{table*}[h!]
\caption{\textbf{Average results on the state-based ExORL benchmark comparing ZOL against other approaches (HILP, FB, LoLA, ReLA, TD-JEPA).}
The table reports the unnormalized return averaged over $100$ rollouts on RND exploration dataset.}
\label{table:exorl_merged}
\vskip 0.10in
\centering
\begin{scriptsize}
\begin{sc}
\setlength{\tabcolsep}{6pt} 
\renewcommand{\arraystretch}{1.05} 
\begin{tabular}{llcccccc}
\toprule
Environment & Task & HILP & FB & LoLA & ReLA & TD-JEPA & \cellcolor{highlight!50}\textbf{ZOL (Ours)} \\
\midrule
\multirow{4}{*}{Walker}
& Spin  & \mstd{900}{5} & \mstd{982}{4}  & \mstd{980}{2} & \mstd{942}{37} & \mstd{985}{6} & \cellcolor{highlight!50}\bestmstd{994}{2} \\
& Run   & \mstd{401}{4} & \mstd{454}{3}  & \mstd{481}{2} & \mstd{382}{3} & \mstd{364}{20} & \cellcolor{highlight!50}\bestmstd{456}{2} \\
& Stand & \mstd{800}{6} & \mstd{876}{16} & \mstd{828}{2} & \mstd{769}{34} & \mstd{900}{18} & \cellcolor{highlight!50}\bestmstd{904}{5} \\
& Walk  & \mstd{850}{2} & \mstd{922}{3}  & \mstd{884}{4} & \mstd{590}{8} & \mstd{823}{30} & \cellcolor{highlight!50}\bestmstd{933}{2} \\
\cmidrule(lr){1-8}
\multirow{4}{*}{Cheetah}
& Run       & \mstd{190}{5} & \mstd{125}{4}  & \mstd{142}{6} & \mstd{116}{5} & \mstd{190}{50} & \cellcolor{highlight!50}\bestmstd{190}{3} \\
& Run Backward  & \mstd{225}{5} & \mstd{240}{5}  & \mstd{190}{12} & \mstd{245}{6} & \cellcolor{highlight!50}\bestmstd{334}{6} & \mstd{263}{4} \\
& Walk      & \mstd{610}{6} & \mstd{544}{14} & \mstd{801}{29} & \mstd{469}{16} & \mstd{635}{50} & \cellcolor{highlight!50}\bestmstd{640}{5} \\
& Walk Backward & \mstd{920}{8} & \mstd{926}{14} & \mstd{918}{22} & \mstd{961}{8} & \cellcolor{highlight!50}\bestmstd{984}{1} & \mstd{957}{5} \\
\cmidrule(lr){1-8}
\multirow{4}{*}{Quadruped}
& Jump  & \mstd{400}{10} & \mstd{460}{14} & \mstd{558}{8} & \mstd{566}{40} & \cellcolor{highlight!50}\bestmstd{570}{30} & \mstd{510}{5} \\
& Run   & \mstd{310}{6}  & \mstd{315}{7}  & \mstd{310}{7} & \mstd{437}{12} & \cellcolor{highlight!50}\bestmstd{390}{20} & \mstd{340}{3} \\
& Stand & \mstd{640}{5}  & \mstd{610}{12} & \mstd{640}{7} & \mstd{846}{37} & \mstd{650}{45} & \cellcolor{highlight!50}\bestmstd{660}{4} \\
& Walk  & \mstd{370}{10} & \mstd{380}{9}  & \mstd{414}{10} & \mstd{417}{10} & \cellcolor{highlight!50}\bestmstd{600}{50} & \mstd{411}{3} \\
\cmidrule(lr){1-8}
\multirow{4}{*}{Point Mass / Maze}
& Bottom Left  & \mstd{640}{5}  & \mstd{819}{4} & \mstd{804}{4} & \mstd{819}{4} & \mstd{213}{10} & \cellcolor{highlight!50}\bestmstd{831}{2} \\
& Bottom Right & \mstd{140}{5}  & \mstd{165}{2} & \mstd{182}{2} & \mstd{132}{2} & \mstd{50}{6} & \cellcolor{highlight!50}\bestmstd{225}{1} \\
& Top Left     & \mstd{900}{4}  & \mstd{890}{3} & \mstd{931}{4} & \mstd{860}{3} & \mstd{473}{2} & \cellcolor{highlight!50}\bestmstd{905}{2} \\
& Top Right    & \mstd{320}{15} & \mstd{773}{4} & \mstd{807}{4} & \mstd{821}{4} & \mstd{690}{60} & \cellcolor{highlight!50}\bestmstd{817}{3} \\
\midrule
\rowcolor{gray!10}
\textbf{Average} &  & \textbf{538.5} & \textbf{592.5} & \textbf{616.9} & \textbf{585.8} & \textbf{553.2} & \cellcolor{highlight}\bestval{627.2} \\
\bottomrule
\end{tabular}
\end{sc}
\end{scriptsize}
\vskip -0.06in
\end{table*}
We evaluate the zero-shot capabilities of ZOL on the \texttt{ExORL} benchmark \citep{yarats2022dont}, a rigorous testbed for evaluating Behavioral Foundation Models (BFMs) under fixed exploratory data. 
Following the protocol in \citet{laskin2021urlb}, we pretrain all baselines on the RND \citep{burda2018exploration} exploratory dataset across four diverse domains: \texttt{Walker}, \texttt{Cheetah}, \texttt{Quadruped}, and \texttt{Pointmass}. This dataset is characterized by a lack of task-specific rewards, while being high coverage across state-action pairs, requiring the model to capture the global transition dynamics and state-occupancy measures. 

\textbf{Evaluation Protocol.} We consider $16$ downstream continuous tasks in total. For each task, we perform ZOL latent policy search for specific number $N$ of iterations (see Appendix \ref{app:experiments} for the details) to identify the directional movement in the latent space that maximizes the inner product with the occupancy ratio correction. To ensure statistical significance, we report the mean and standard deviation of unnormalized returns across $100$ independent test-time rollouts per task. As shown in Table \ref{table:exorl_merged}, ZOL consistently outperforms recent promptable baselines, particularly in high-dimensional domains like Quadruped.

\subsection{OGBench}
To test the limits of our approach on long-horizon reasoning and high-dimensional manipulation, we utilize \texttt{OGBench} \citep{park2025ogbench}. This benchmark is significantly more challenging than DMC due to its sparse reward structure and the requirement for temporal stitching.
We evaluate on five domains: \texttt{antmaze-medium}, \texttt{antmaze-large}, \texttt{cube-double}, \texttt{cube-single}, and \texttt{puzzle-3x3}. For the AntMaze environments, we specifically focus on the \texttt{stitch} and \texttt{explore} splits, which contain disjoint trajectories that must be connected via the learned Hilbert representation to reach the goal. 

\textbf{Inference vs. Finetuning.} Our method achieves superior success rates (Table \ref{table:ogbench_relalolazol}) without a single step of additional environment sampling, unlike LoLA and ReLA \citep{sikchi2025fast}. This result empirically validates our hypothesis that the distribution correction ratio is sufficient to recover near-optimal policies, effectively bypassing the need for expensive test-time fine-tuning.

\vspace{-1mm}
\subsection{SMPL Humanoid}
\vspace{-2mm}
\begin{figure}[h!]
    \centering
    \vspace{-2mm}
    \includegraphics[width=0.85\linewidth]{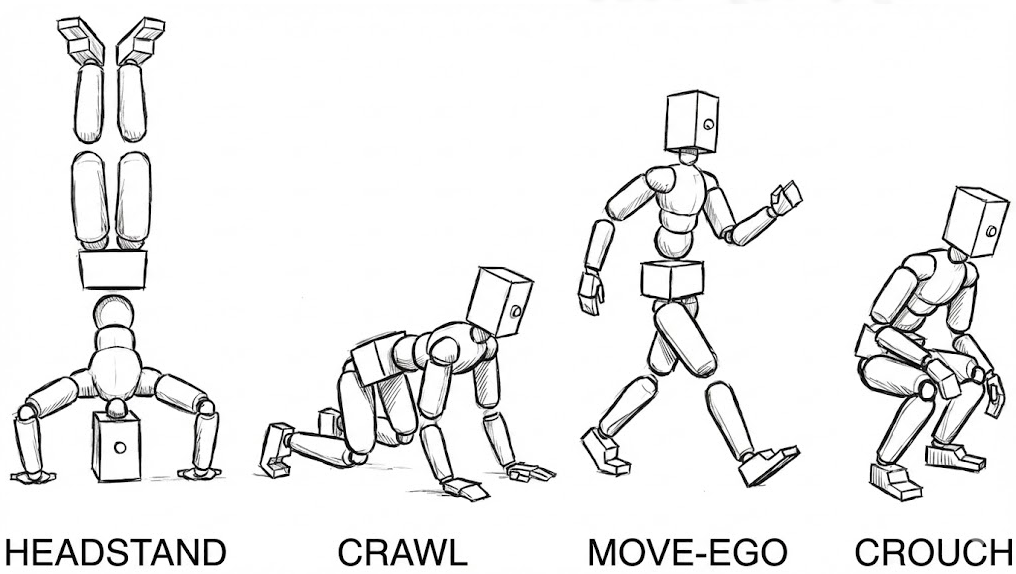}
        \caption{Illustration of tasks on SMLP Humanoid.}
    \vspace{-4mm}
    \label{fig:humanoid_draw}
\end{figure}
\vspace{-1mm}
In this benchmark, we use the pretrained FB-CPR model of \cite{tirinzoni2024metamotivo} for \texttt{HumEnv}. The state is $358$-dimensional (joint positions) and the action is $69$-dimensional. We follow the model’s inference recipe to infer $z_r$ via weighted regression on samples from an online buffer (using large replay batches from training stage), with buffer size $\sim 500$k.
For motion tracking, the latent is \emph{time-varying}: we infer a separate $z_t$ at each step by averaging backward embeddings over a short future window,
$
z_t \propto \sum_{j=t+1}^{t+L+1} B_\theta(s_j),
$
where $L$ matches the pretraining encoding length~\citep{tirinzoni2024metamotivo}. We evaluate with $50$ rollouts and report mean return.
This benchmark directly supports our DICE-style view of ZOL. FB-CPR pretraining regularizes the \emph{joint} $(s,z)$ distribution induced by $\pi_z$ toward the AMASS-induced joint distribution, shaping a latent space of human-like skills (Fig. \ref{fig:humanoid_draw}). At test time, prompting selects  a region of this space, and distribution/occupancy correction addresses the mismatch between where the prompt is evaluated and where the rollout places mass. Figure~\ref{fig:humenv} shows gains over baselines: ZOL finds substantially better policies. Full results are in Table~\ref{tab:fb_vs_zol_full_names}.
\vspace{-3mm}
\section{Ablation Study}
\label{sec:ablation}
We systematically vary key ZOL hyperparameters to evaluate robustness and sensitivity, and find that performance is largely agnostic to these choices. 
\begin{figure}[h!]
    \centering
    \includegraphics[width=0.8\linewidth]{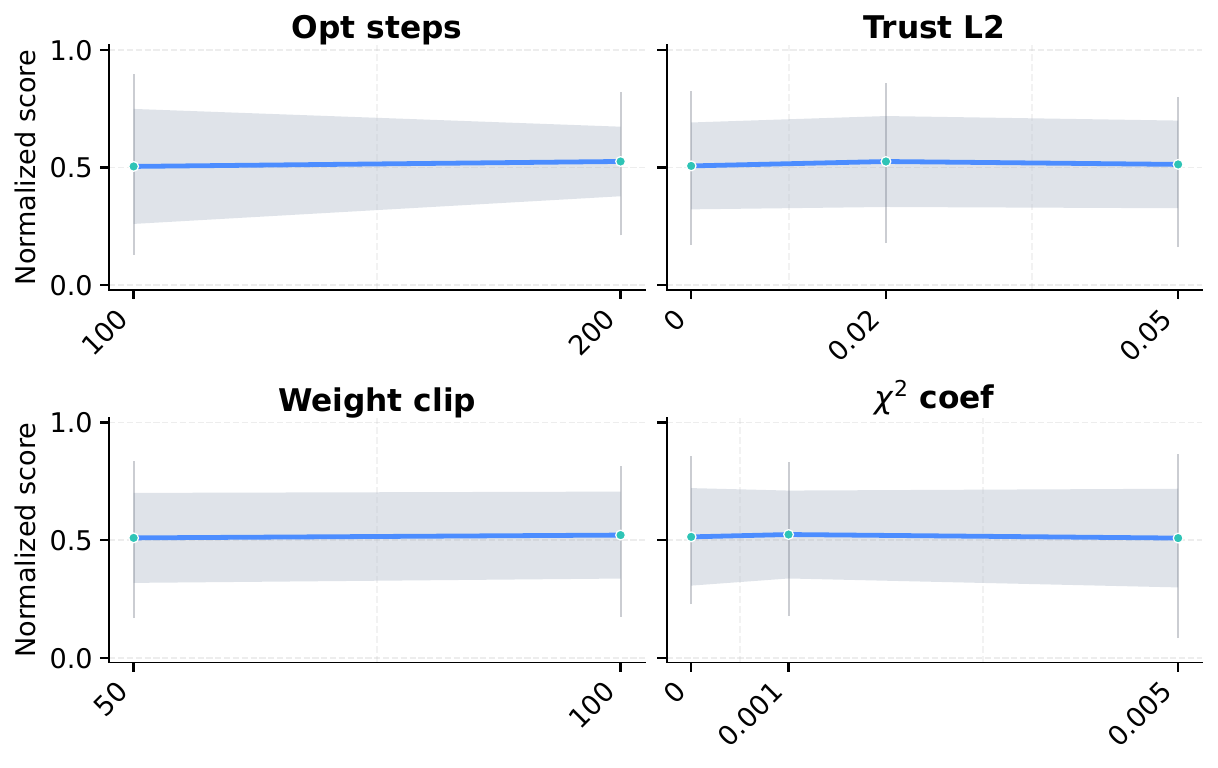}
    \vspace{-2mm}
    \caption{Hyperparameters ablation on the normalized return on the DMC benchmark.}
    \vspace{-3mm}
    \label{fig:toy-2d-ablation}
\end{figure}
Across the tested ranges, ZOL is broadly robust: varying any single hyperparameter yields only modest changes in mean performance, indicating limited sensitivity and little need for extensive tuning. Performance consistently favors \emph{fewer} inner optimization steps, suggesting diminishing returns and slight degradation consistent with over-optimization when the adaptation objective is optimized too aggressively. Regularization shows the expected pattern: a \emph{small} trust penalty improves stability and typically increases return, whereas a large $\chi^2$ penalty becomes overly conservative and can reduce performance. Weight clipping is largely insensitive within this range and primarily functions as a safety constraint. The same results were obtained on the SMPL Humanoid. Please see Fig.\ref{fig:toy-2d-ablation} and \ref{fig:toy-2d-humanoid}, gray area depicts standard deviation across $100$ rollouts. See Appendix~\ref{app:ablation} for details and SMPL ablation.
% We evaluate ZOL's hyperparameter sensitivity across the \textbf{16 ExORL DeepMind Control (DMC) tasks} (72 configurations, 1152 runs) and the \textbf{45 SMPL Humanoid tasks} (15 configurations, 675 runs).
% In both domains, ZOL is robust: varying any single hyperparameter changes the mean return by \textbf{<0.5\%}.

% Performance favors fewer to moderate steps. In DMC, 100 steps (596.46) outperforms 200 (593.82). In SMPL, 150 steps is optimal (151.38), slightly exceeding 100 (150.96) and 200 (151.26).

% A small trust penalty improves stability: DMC favors $\texttt{trust\_l2\_coef}=0.02$ (596.23), while the complex SMPL tasks prefer $0.5$ (151.29). Large $\chi^2$ penalties are detrimental; negligible coefficients ($0.001$ for DMC, $0.01$ for SMPL) yield the best results.

% This acts primarily as a safety constraint with negligible impact on performance in both domains. See Appendix~\ref{app:ablation} for details.
\vspace{-2mm}
\section{Conclusion \& Limitations}
\vspace{-1mm}
\label{sec:conclusion}
We revisit fast adaptation for BFMs via stationary occupancy correction ratios. Although zero-shot task inference should, in principle, yield optimal policies, in practice it can produce suboptimal behaviors by choosing unrelated task encodings and inducing shift in the recovered policy. We link stationary density ratios to successor measures and show FB representations give a low-rank, data-only occupancy-correction estimator. Building on this, we propose \emph{Zero-Shot Off-policy Learning (ZOL)}, a training-free test-time method that uses FB-induced ratios to reweight task supervision and steer latent policy selection toward task-aligned, dataset-consistent behavior. ZOL improves robustness over prior zero-shot baselines across benchmarks without changing pretraining or requiring online interaction.

\textbf{Limitations and future work.}
ZOL inherits the representational bias of the learned FB factorization: when the ratio estimate is inaccurate or the downstream task requires sustained out-of-support behavior, correction can only trade off conservatism and performance. Promising directions include uncertainty-aware occupancy correction, stronger representations and richer ratio parameterizations, and extensions to stochastic, partially observable domains.

\section*{Impact Statement}
This paper presents work whose goal is to advance the field of Machine
Learning. There are many potential societal consequences of our work, none
which we feel must be specifically highlighted here.
\newpage
\clearpage
% \section*{Impact Statement}

% % Authors are \textbf{required} to include a statement of the potential broader
% % impact of their work, including its ethical aspects and future societal
% % consequences. This statement should be in an unnumbered section at the end of
% % the paper (co-located with Acknowledgements -- the two may appear in either
% % order, but both must be before References), and does not count toward the paper
% % page limit. In many cases, where the ethical impacts and expected societal
% % implications are those that are well established when advancing the field of
% % Machine Learning, substantial discussion is not required, and a simple
% % statement such as the following will suffice:

% % ``
% This paper presents work whose goal is to advance the field of Machine
% Learning. There are many potential societal consequences of our work, none
% which we feel must be specifically highlighted here.%''

% % The above statement can be used verbatim in such cases, but we encourage
% % authors to think about whether there is content which does warrant further
% % discussion, as this statement will be apparent if the paper is later flagged
% % for ethics review.

\bibliography{references}
\bibliographystyle{icml2026}

\newpage
\appendix
\onecolumn
% \section{Additional Experiments}
% \begin{figure*}[h!]
%     \centering
%     \includegraphics[width=0.95\linewidth]{Images/sm_comparison_paper.png}
%     \caption{Caption}
%     \label{fig:sm_comparison_toy}
% \end{figure*}
\begin{table*}[t]
\caption{\textbf{FB vs. ZOL comparison on ExORL (state-based) with $100$ evaluation rollouts.}
We report mean $\pm$ std over $100$ rollouts for FB and ZOL, and the absolute improvement $\Delta$ with relative gain.}
\label{tab:fb_vs_zol_exorl_std}
\vskip 0.1in
\centering
\begin{small}
\begin{sc}
\setlength{\tabcolsep}{7pt}
\renewcommand{\arraystretch}{1.08}
\begin{tabular}{llrrr}
\toprule
Environment (DMC-state) & Task & FB & \cellcolor{highlight!50}\textbf{ZOL (Ours)} & \textcolor{best}{\textbf{$\Delta$}}(Improvement) \\
\midrule
\multirow{4}{*}{Walker}
& Spin  & \mstd{982}{2}  & \cellcolor{highlight!50}\bestmstd{994}{2}  & \imp{2.07}{0.21} \\
& Run   & \mstd{454}{3}  & \cellcolor{highlight!50}\bestmstd{456}{2}  & \imp{2.34}{0.51} \\
& Stand & \mstd{876}{16} & \cellcolor{highlight!50}\bestmstd{904}{5}  & \imp{27.88}{3.18} \\
& Walk  & \mstd{922}{3}  & \cellcolor{highlight!50}\bestmstd{933}{2}  & \imp{10.97}{1.19} \\
\cmidrule{1-5}
\multirow{4}{*}{Cheetah}
& Run          & \mstd{125}{4} & \cellcolor{highlight!50}\bestmstd{190}{3} & \imp{64.90}{51.79} \\
& Run Backward & \mstd{240}{5} & \cellcolor{highlight!50}\bestmstd{263}{4} & \imp{18.06}{7.35} \\
& Walk         & \mstd{544}{14}& \cellcolor{highlight!50}\bestmstd{640}{5} & \textcolor{best}{\textbf{+93.16}}\ {\color{stdgray}\scriptsize(17.12\%)} \\
& Walk Backward& \mstd{926}{14}& \cellcolor{highlight!50}\bestmstd{957}{6} & \imp{31.23}{3.37} \\
\cmidrule{1-5}
\multirow{4}{*}{Quadruped}
& Jump  & \mstd{460}{14} & \cellcolor{highlight!50}\bestmstd{510}{5} & \imp{40.91}{8.71} \\
& Run   & \mstd{315}{7}  & \cellcolor{highlight!50}\bestmstd{345}{3} & \imp{23.97}{7.61} \\
& Stand & \mstd{610}{12} & \cellcolor{highlight!50}\bestmstd{660}{4} & \imp{47.74}{7.79} \\
& Walk  & \mstd{380}{10}  & \cellcolor{highlight!50}\bestmstd{421}{9} & \imp{27.63}{7.19} \\
\cmidrule{1-5}
\multirow{4}{*}{Point Mass / Maze}
& Bottom Left  & \mstd{819}{4} & \cellcolor{highlight!50}\bestmstd{831}{2} & \imp{11.82}{1.44} \\
& Bottom Right & \mstd{165}{2} & \cellcolor{highlight!50}\bestmstd{225}{1} & \imp{59.56}{35.98} \\
& Top Left     & \mstd{890}{3} & \cellcolor{highlight!50}\bestmstd{905}{2} & \imp{12.45}{1.40} \\
& Top Right    & \mstd{773}{4} & \cellcolor{highlight!50}\bestmstd{817}{3} & \imp{43.36}{5.60} \\
\midrule
\rowcolor{gray!10}
\textbf{Average (all tasks)} &  & \textbf{592.5} & \cellcolor{highlight}\bestval{627.2} & \imp{32.38}{10.03} \\
\bottomrule
\end{tabular}
\end{sc}
\end{small}
\vskip -0.05in
\end{table*}
\section{Related Works}
\label{app:literature}
\paragraph{Behavioral foundation models.} \emph{\textbf{A behavioral foundation model}} is an agent that, for a fixed MDP, can be trained unsupervised on reward-free transition data and then, at test time, immediately output approximately optimal policies for a broad family of reward functions $r$ without any extra learning or planning. In this work, we study zero-shot RL agents built on successor features and forward–backward representations. Below we outline most successful examples of BFMs.

\textbf{Universal Successor Features (USF)} \citep{borsa2018universal}
methods learn \emph{successor features} (SFs) that decouple dynamics from rewards by representing a policy through the expected discounted visitation of a feature map $\phi(s,a)$. Reward functions are assumed to be linear in features, $r(s,a)=w^\top \phi(s,a)$, so that values transfer by a simple inner product $Q^w(s,a)=w^\top \psi(s,a)$, where $\psi$ denotes the learned SF. After reward-free pretraining, a new task is specified by $w$, and the agent can act zero-shot by evaluating $w^\top \psi(s,a)$ and selecting actions accordingly; policy improvement can be obtained via generalized policy improvement (GPI) over a set of learned policies.

\textbf{HILP} \citep{park2024foundation}
HILP (Hilbert foundation policies) pre-trains a general latent-conditioned policy from unlabeled offline trajectories in two stages.
First, it learns a Hilbert representation $\phi:S\to Z$ such that Euclidean distances in latent space approximate optimal temporal distances in the MDP, i.e., $d^*(s,g)\approx \|\phi(s)-\phi(g)\|$, by training a goal-conditioned value function parameterized as $V(s,g)=-\|\phi(s)-\phi(g)\|$ with an offline goal-conditioned value-learning objective.
Second, it trains a latent-conditioned policy $\pi(a\mid s,z)$ (with $z$ sampled uniformly from the unit sphere) using offline RL on an intrinsic directional reward
$r(s,z,s')=\langle \phi(s')-\phi(s),\, z\rangle$,
so that maximizing reward for all directions $z$ yields a diverse set of long-horizon behaviors that span the latent space.
At test time, the same policy can be “prompted” by choosing $z$ from a goal direction (e.g., $z \propto \phi(g)-\phi(s)$ for goal reaching) or fit from an arbitrary reward via linear regression against $\phi(s')-\phi(s)$ for zero-shot RL.

\textbf{TD-JEPA} \citep{bagatella2025td}
extends joint-embedding predictive objectives to RL by learning latent states that are \emph{predictive under temporal-difference learning}. From reward-free transitions, it trains an encoder and a predictive model so that the embedding of a future state can be predicted from the current embedding and action, with TD-style bootstrapping providing a stable long-horizon learning signal. The resulting latent space is shaped to preserve controllable, temporally-extended structure, allowing downstream rewards to be handled by simple heads in the learned space. Consequently, once a reward function is specified, a policy can be produced zero-shot by evaluating the reward in the learned representation and acting greedily (or with a lightweight improvement step) without further environment interaction.

\textbf{FB-CPR.} \citep{tirinzoni2024metamotivo}
learns a latent-conditioned policy family $\pi(\cdot \mid s, z)$ using Forward--Backward (FB) representations, and grounds this unsupervised policy space with an unlabeled behavior dataset of state-only trajectories. 
It embeds each demonstration trajectory $\tau$ into the same latent space as the policies using an FB-based encoder $z=\mathrm{ERFB}(\tau)$, which induces a joint distribution $p_{\mathcal M}(s,z)$ over states and inferred latents. 
Training regularizes FB policy improvement by matching the policy-induced joint distribution $p_\pi(s,z)=\nu(z)\rho^{\pi_z}(s)$ to $p_{\mathcal M}(s,z)$ via a KL penalty. 
To make this matching tractable, FB-CPR trains a latent-conditional discriminator $D(s,z)$ to estimate the density ratio between dataset samples $(s,\mathrm{ERFB}(\tau))$ and policy rollouts $(s,z)$, yielding a reward surrogate $\log \frac{p_{\mathcal M}(s,z)}{p_\pi(s,z)} \approx \log \frac{D(s,z)}{1-D(s,z)}$. 
A critic $Q(s,a,z)$ is learned off-policy to predict discounted return under this reward, and the actor is updated to maximize both the FB value term $F(s,a,z)^\top z$ and the regularization term $\alpha Q(s,a,z)$, interleaving environment rollouts with off-policy updates of $F,B$, $D$, $Q$, and $\pi$.

For the hyperparameters of each of the methods we used in our experiments, see Appendix \ref{app:experiments}.
\section{Analysis}
\label{app:proofs}

\section{ZOL Algorithm Details \& Proofs}
\label{app:theory}
\subsection{Setting and Notation}
We assume an offline dataset (or replay batch) of $B$ state samples (or state--action samples)
\[
\mathcal{D}=\{(s_i, r_i)\}_{i=1}^B,
\]
collected by a behavior policy $\beta$, inducing a discounted occupancy distribution $d^\beta$.
We also assume a pretrained Forward--Backward (FB) representation model with:
\begin{itemize}
  \item a \emph{backward embedding} $B_\theta(s)\in\mathbb{R}^d$ (often written $B(s,a)$; in many implementations it is state-only),
  \item a \emph{forward embedding} $F_\theta(s,a,z)\in\mathbb{R}^d$ for latent $z\in\mathbb{R}^{d_z}$,
  \item a latent-indexed policy $\pi_z(a\mid s)$ from which we can compute an action $a=\pi_z(s)$ (in code: \texttt{model.act}).
\end{itemize}
The algorithm seeks an improved latent $z$ starting from a zero-shot (or inferred) latent $z_0$.

\subsection{FB-Induced Stationary Density Ratio Surrogate}
A key theoretical identity in FB-based off-policy adaptation is that the stationary density ratio
\[
w_z(s) \;\approx\; (1-\gamma)\,\mu(z)^\top B_\theta(s),
\]
where $\gamma\in(0,1)$ is the discount and
\[
\mu(z) \;:=\; \mathbb{E}_{(s_0,a_0)\sim\rho_0}\big[F_\theta(s_0,a_0,z)\big]\in\mathbb{R}^d
\]
is the expected forward embedding under the initial distribution $\rho_0$ and policy induced by $z$.

\paragraph{Practical approximation of $\mu(z)$.}
We approximate $\mu(z)$ by caching $N$ reset states $\{s^{(j)}_0\}_{j=1}^N$ and computing
\[
\widehat{\mu}(z) \;=\; \frac{1}{N}\sum_{j=1}^N F_\theta\!\big(s^{(j)}_0,\, a^{(j)}_0,\, z\big),
\quad
a^{(j)}_0 = \pi_z\!\big(s^{(j)}_0\big).
\]
This estimate is differentiable with respect to $z$ because $z$ influences both $\pi_z$ and $F_\theta$.

\subsection{Weight Shaping: Positivity and Self-Normalization}
Because function approximation can yield negative or arbitrarily scaled ratios, we shape the raw logits
\[
\ell_i(z) \;:=\; (1-\gamma)\,B_\theta(s_i)^\top \widehat{\mu}(z)
\]
into a stable positive weight. The algorithm uses:
\begin{enumerate}
  \item \textbf{Positivity} via softplus:
  \[
  \widetilde{w}_i(z) \;=\; \mathrm{softplus}\big(\ell_i(z)\big)
  \]
  ensuring $\widetilde{w}_i(z)>0$.
  \item \textbf{Self-normalization} to enforce $\mathbb{E}_{d^\beta}[w]\approx 1$:
  \[
  w_i(z) \;=\; \frac{\widetilde{w}_i(z)}{\frac{1}{B}\sum_{j=1}^B \widetilde{w}_j(z) + \varepsilon},
  \]
  where $\varepsilon>0$ prevents division by zero.
  \item \textbf{Clipping} (optional) to prevent extreme weights:
  \[
  w_i(z)\leftarrow \min\{w_i(z),\, w_{\max}\}.
  \]
\end{enumerate}

\subsection{Reward Centering (Advantage-Style Baseline)}
 A frequent failure mode in offline optimization is collapsing to \emph{do nothing} behaviors when rewards contain
large constant offsets (e.g., an \emph{alive bonus}). To mitigate this, the algorithm centers rewards:
\[
r'_i \;=\; r_i - \frac{1}{B}\sum_{j=1}^B r_j.
\]
This converts the objective into an \emph{advantage-weighted} form that discourages trivial stationary solutions.

\begin{theorem}[Centered reweighting equals the target policy return up to a constant]
\label{thm:centered_reweighting}
We write $w_{\pi}$ as an shorthand of $w_{\pi/\beta}$. Assume $d_\beta(s)>0$ whenever $d_\pi(s)>0$ and define the density ratio
\[
w_\pi(s)\;:=\;\frac{d_\pi(s)}{d_\beta(s)}.
\]
Then the centered reweighted objective
\[
J_c(\pi)\;:=\;\mathbb{E}_{s\sim d_\beta}\Big[w_\pi(s)\big(r(s)-\bar r_\beta\big)\Big]
\quad\text{with}\quad
\bar r_\beta := \mathbb{E}_{s\sim d_\beta}[r(s)]
\]
satisfies
\[
J_c(\pi)\;=\;\mathbb{E}_{s\sim d_\pi}[r(s)]\;-\;\bar r_\beta.
\]
Hence $\arg\max_\pi J_c(\pi) = \arg\max_\pi \mathbb{E}_{d_\pi}[r(s)]$.
\end{theorem}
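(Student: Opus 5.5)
The plan is a direct change-of-measure computation, using linearity of expectation and the normalization of $d_\pi$. First I will use the support assumption ($d_\beta(s)>0$ whenever $d_\pi(s)>0$) to justify the identity
\[
\mathbb{E}_{s\sim d_\beta}\big[w_\pi(s)\,g(s)\big]=\mathbb{E}_{s\sim d_\pi}[g(s)]
\]
for any integrable $g$. On $\{d_\beta>0\}$ the integrand is $d_\beta(s)\cdot\frac{d_\pi(s)}{d_\beta(s)}\,g(s)=d_\pi(s)g(s)$. On the complement $\{d_\beta=0\}$, two things hold. The $d_\beta$-expectation places no mass there, and the convention $w_{\pi/\beta}=0$ from the background section keeps the integrand well defined. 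By the support assumption $d_\pi$ also vanishes there, so nothing is lost on the $d_\pi$ side.

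Second, I will split the centered objective by linearity:
\[
J_c(\pi)=\mathbb{E}_{d_\beta}[w_\pi r]-\bar r_\beta\,\mathbb{E}_{d_\beta}[w_\pi].
\]
This is legitimate because $\bar r_\beta$ is a constant independent of $s$. Applying the change of measure with $g=r$ turns the first term into $\mathbb{E}_{d_\pi}[r]$. Applying it with $g\equiv 1$ turns the second expectation into $\int d_\pi=1$, since $d_\pi$ is a probability distribution by construction (see \eqref{eq:occupancy_def}). Together these give $J_c(\pi)=\mathbb{E}_{d_\pi}[r]-\bar r_\beta$.

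Third, for the argmax claim I will observe that $\bar r_\beta$ depends only on $\beta$ and $r$, not on $\pi$. Subtracting it is therefore a $\pi$-independent shift, which preserves maximizers over any policy class, including the latent family $\{\pi_z\}$.

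The only delicate point is integrability and the zero-density convention. I will assume $r$ is bounded, or at least integrable under $d_\pi$ and $d_\beta$, so that linearity and finiteness of $\bar r_\beta$ hold; with that assumption the argument is routine. I will also remark that the statement concerns the exact ratio. The practical softplus, self-normalized and clipped weights only approximate the normalization $\mathbb{E}_{d_\beta}[w]=1$, and that normalization is exactly what makes the constant come out as $\bar r_\beta$.
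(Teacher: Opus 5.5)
Your proposal is correct and matches the paper's proof essentially step for step. The paper also cancels $d_\beta$ to obtain $\mathbb{E}_{d_\beta}[w_\pi r]=\mathbb{E}_{d_\pi}[r]$ and $\mathbb{E}_{d_\beta}[w_\pi]=1$, then concludes by linearity and the $\pi$-independence of $\bar r_\beta$; your handling of the zero-density convention and of integrability of $r$ just makes explicit what the paper's sum-based computation leaves implicit.
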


\begin{proof}
We have
\[
\mathbb{E}_{d_\beta}\!\left[w_\pi(s)\,r(s)\right]
=\sum_s d_\beta(s)\frac{d_\pi(s)}{d_\beta(s)}r(s)
=\sum_s d_\pi(s)r(s)
=\mathbb{E}_{d_\pi}[r(s)].
\]
Also,
\[
\mathbb{E}_{d_\beta}[w_\pi(s)]
=\sum_s d_\beta(s)\frac{d_\pi(s)}{d_\beta(s)}
=\sum_s d_\pi(s)
=1.
\]
Therefore
\[
J_c(\pi)
=\mathbb{E}_{d_\beta}[w_\pi r] - \bar r_\beta \mathbb{E}_{d_\beta}[w_\pi]
=\mathbb{E}_{d_\pi}[r] - \bar r_\beta.
\]
\end{proof}
The claim above proves why does performing rewards centering is principled, as well why ZOL on certain benchmarks provide s negligible performance improvements (due to $w_z \approx 1$)

\subsection{Objective: Off-Policy Return with Regularizers}
The core (naive) off-policy objective is the reward-weighted importance estimate
\[
\widehat{J}(z)
\;=\;
\frac{1}{B}\sum_{i=1}^B w_i(z)\, r'_i.
\]
To stabilize optimization, we add two regularizers:

\paragraph{(i) Chi-square-like ratio variance penalty.}
To prevent weight explosion, penalize deviation from $1$:
\[
\widehat{\chi^2}(z)
\;=\;
\frac{1}{B}\sum_{i=1}^B (w_i(z)-1)^2.
\]

\begin{theorem}[Chi-square penalty equals a $\chi^2$-divergence]
\label{thm:chi2_divergence}
Let $w_\pi(s)=d_\pi(s)/d_\beta(s)$. Then
\[
\mathbb{E}_{s\sim d_\beta}\big[(w_\pi(s)-1)^2\big]
=\chi^2\!\left(d_\pi \,\|\, d_\beta\right)
:=\sum_s \frac{(d_\pi(s)-d_\beta(s))^2}{d_\beta(s)}.
\]
\end{theorem}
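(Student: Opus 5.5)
The plan is a direct computation in the discrete-state setting, following the same pattern as the proof of Theorem~\ref{thm:centered_reweighting}. Under the support assumption ($d_\beta(s)>0$ whenever $d_\pi(s)>0$), the states with $d_\beta(s)=0$ have $d_\pi(s)=0$ as well. They carry zero weight under $d_\beta$ and are excluded from the $\chi^2$ sum, which runs only over the support of $d_\beta$. So all sums below can be taken over $\{s: d_\beta(s)>0\}$, and there the ratio $w_\pi(s)=d_\pi(s)/d_\beta(s)$ is well defined.

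First, write the left-hand side as a sum and substitute the ratio:
\[
\mathbb{E}_{s\sim d_\beta}\big[(w_\pi(s)-1)^2\big]=\sum_s d_\beta(s)\Big(\frac{d_\pi(s)}{d_\beta(s)}-1\Big)^2 .
\]
Next, put the bracket over a common denominator, $\frac{d_\pi(s)-d_\beta(s)}{d_\beta(s)}$. Squaring it and multiplying by $d_\beta(s)$ cancels one factor of $d_\beta(s)$, which gives $\frac{(d_\pi(s)-d_\beta(s))^2}{d_\beta(s)}$ term by term. This is exactly the stated $\chi^2(d_\pi\|d_\beta)$.

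As a consistency check, one can instead expand the square and use the two identities from the proof of Theorem~\ref{thm:centered_reweighting}, namely $\mathbb{E}_{d_\beta}[w_\pi]=1$ and $\mathbb{E}_{d_\beta}[w_\pi^2]=\sum_s d_\pi(s)^2/d_\beta(s)$. This yields $\sum_s d_\pi^2/d_\beta-1$, the other standard form of the $\chi^2$-divergence.

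There is no real analytic obstacle. The only point needing care is the support convention: the paper defines $w=0$ where $d_\beta=0$, and the $\chi^2$ sum must omit those states. The assumption $d_\pi\ll d_\beta$ guarantees that nothing is lost by omitting them. For continuous spaces the same argument applies with sums replaced by integrals against densities (Radon--Nikodym derivatives).
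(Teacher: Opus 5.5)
Your computation is correct and matches the paper's proof: substitute $w_\pi=d_\pi/d_\beta$, rewrite the bracket as $(d_\pi-d_\beta)/d_\beta$, and cancel one factor of $d_\beta$ term by term. Restricting the sums to $\{s: d_\beta(s)>0\}$ under $d_\pi\ll d_\beta$ is a sensible addition rather than a different route, since the paper states that support assumption only in the preceding theorem and leaves it implicit here.
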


\begin{proof}
\[
\mathbb{E}_{d_\beta}[(w_\pi-1)^2]
=\sum_s d_\beta(s)\left(\frac{d_\pi(s)}{d_\beta(s)}-1\right)^2
=\sum_s d_\beta(s)\frac{(d_\pi(s)-d_\beta(s))^2}{d_\beta(s)^2}
=\sum_s \frac{(d_\pi(s)-d_\beta(s))^2}{d_\beta(s)}.
\]
\end{proof}

\paragraph{(ii) Trust-region penalty in latent space.}
Let $\Pi(\cdot)$ denote the model's latent projection/normalization (in code: \texttt{project\_z}).
We penalize deviation from the initial latent:
\[
\widehat{\Omega}(z)
\;=\;
\big\|\Pi(z)-\Pi(z_0)\big\|_2^2.
\]

\paragraph{Final optimization problem.}
The algorithm maximizes
\[
\max_{z}\;
\underbrace{\widehat{J}(z)}_{\text{off-policy reward}}
\;-\;
\lambda_{\chi}\underbrace{\widehat{\chi^2}(z)}_{\text{ratio stability}}
\;-\;
\lambda_{z}\underbrace{\widehat{\Omega}(z)}_{\text{trust region}},
\]
where $\lambda_{\chi}\ge 0$ and $\lambda_z\ge 0$ are hyperparameters.

\subsection{Gradient-Based Update Rule (Adam)}
We optimize $z$ with a first-order optimizer (Adam). At each iteration $t$:
\begin{enumerate}
  \item Compute $\widehat{\mu}(z_t)$ from cached reset observations.
  \item Compute weights $w_i(z_t)$ for batch samples.
  \item Compute objective
  \[
  \mathcal{L}(z_t)
  \;=\;
  -\widehat{J}(z_t)
  + \lambda_{\chi}\widehat{\chi^2}(z_t)
  + \lambda_{z}\widehat{\Omega}(z_t).
  \]
  \item Take an Adam step:
  \[
  z_{t+1} \leftarrow \mathrm{AdamUpdate}(z_t, \nabla_{z}\mathcal{L}(z_t)).
  \]
  \item (Optional) Clip the gradient norm to improve numerical stability:
  \[
  \nabla_z \mathcal{L}(z_t)
  \leftarrow
  \nabla_z \mathcal{L}(z_t)\cdot
  \min\Big\{1, \frac{c}{\|\nabla_z \mathcal{L}(z_t)\|_2}\Big\}.
  \]
\end{enumerate}

\subsection{Discussion and Failure Mode Mitigation}
This procedure is intentionally simple and purely off-policy: it updates only the latent variable $z$ while keeping
all pretrained networks fixed. The stabilization components are essential:
\begin{itemize}
  \item \textbf{Reward centering} prevents optimizing constant reward offsets that favor stationary or \emph{do nothing} behavior.
  \item \textbf{Self-normalized positive weights} reduce variance and prevent trivial scaling of density ratios.
  \item \textbf{Chi-square penalty and clipping} prevent ratio blow-up, a common off-policy pathology where optimization exploits estimator error.
  \item \textbf{Trust-region in latent space} keeps the search near the zero-shot latent and helps avoid catastrophic drift to degenerate skills.
\end{itemize}
\begin{proposition}[On-policy (or near-on-policy) data implies no gain from correction]
\label{prop:on_policy_flat}
If $d_\beta = d_\pi$, then $w_\pi(s)\equiv 1$ and the centered objective is identically zero:
\[
J_c(\pi)=\mathbb{E}_{d_\beta}\big[1\cdot(r-\bar r_\beta)\big]=0.
\]
Therefore any optimization of $J_c(\pi)$ provides no systematic improvement (only regularizers/noise remain).
\end{proposition}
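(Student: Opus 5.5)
The plan is to substitute the hypothesis $d_\beta = d_\pi$ directly into the definition of the density ratio and then into the centered objective of Theorem~\ref{thm:centered_reweighting}.

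First, on the support of $d_\beta$ (where the ratio is defined and the support assumption of Theorem~\ref{thm:centered_reweighting} holds trivially), we have $w_\pi(s) = d_\pi(s)/d_\beta(s) = 1$. Off the support, the paper's convention sets $w_\pi = 0$, but those states carry zero $d_\beta$-mass and do not affect any expectation under $d_\beta$. So $w_\pi \equiv 1$ $d_\beta$-almost surely.

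Second, we plug this into $J_c$:
\[
J_c(\pi)=\mathbb{E}_{d_\beta}[r]-\bar r_\beta=0
\]
by the definition of $\bar r_\beta$. Alternatively, we can invoke Theorem~\ref{thm:centered_reweighting} directly: $J_c(\pi)=\mathbb{E}_{d_\pi}[r]-\bar r_\beta=\mathbb{E}_{d_\beta}[r]-\bar r_\beta=0$. The $\chi^2$ penalty also vanishes here, by Theorem~\ref{thm:chi2_divergence}, since $\chi^2(d_\beta\|d_\beta)=0$.

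The main obstacle is interpreting the ``no systematic improvement'' clause, since the literal claim only concerns the single policy $\pi$ with $d_\pi = d_\beta$. I would read it as follows. At the point $d_\pi=d_\beta$ the estimator is flat: $J_c=0$, the weights are constant, and the variation of the ratio-weighted term contributes nothing beyond the true return difference $\mathbb{E}_{d_{\pi'}}[r]-\mathbb{E}_{d_\beta}[r]$. Hence any signal in the empirical objective at this point comes only from the regularizers or from sampling noise in $\widehat J$. I would state this as the interpretive conclusion rather than attempt a quantitative gradient claim.
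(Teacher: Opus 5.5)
Your argument is correct and is the paper's own: substituting $d_\beta=d_\pi$ gives $w_\pi\equiv 1$ (your remark that this holds $d_\beta$-a.s.\ under the zero-off-support convention is a harmless refinement), and then $J_c(\pi)=\mathbb{E}_{d_\beta}[r]-\bar r_\beta=0$ by definition of $\bar r_\beta$. The paper's proof stops there and does not argue the ``no systematic improvement'' clause; you should not argue it in the form you wrote either, because $J_c$ is not flat at $\pi$: by Theorem~\ref{thm:centered_reweighting}, $J_c(\pi')=\mathbb{E}_{d_{\pi'}}[r]-\bar r_\beta$ for supported $\pi'$ near $\pi$, so its gradient at $\pi$ is the ordinary policy gradient rather than only regularizer or noise terms, and the clause is justified only when $w_{\pi'}\approx 1$ for all candidate policies (the paper's $w_z\approx 1$ remark).
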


\begin{proof}
If $d_\beta=d_\pi$, then $w_\pi=d_\pi/d_\beta\equiv 1$ and
$\mathbb{E}_{d_\beta}[r-\bar r_\beta]=0$. Hence $J_c(\pi)=0$.
\end{proof}

\section{Experimental Setup}
\label{app:experiments}

\subsection{Environments}
\label{app:env_info}
We list the continuous control environments from the DeepMind Control Suite \citep{tassa2018deepmind} and Humenv~\citep{tirinzoni2024metamotivo} used in this work in Table~\ref{tab:env_list}.

\begin{table}[h]
    \centering
    \begin{tabular}{cccc}
        \hline
        Domain & Observation dimension & Action dimension & Episode length\\
        \hline
        Pointmass &4 &2 & 1000\\
        Walker & 24 & 6 & 1000\\
        Cheetah & 17 & 6 & 1000\\
        Quadruped & 78 & 12 & 1000\\
        HumEnv & 358 & 69 & 300\\
        \hline
    \end{tabular}
    \caption{Overview of observation spaces, action spaces and episode length of environments used in this work.}
\label{tab:env_list}
\end{table}

\subsection{ZOL Hyperparameters}
\textbf{DMC-state.} For state-based DMC environments, the results can be seen in Table \ref{tab:zol-dmc-hparams}. Most hyperparameters require almost no additional tuning as well as they are robust (provide marginal decrease/increase in returns).

\textbf{OGBench}. For state-based OGBench tasks, we report most suitable ZOL hyperparameters in the Table~\ref{tab:zol-ogbench-hparams}. We note that based on ablation studies and our experiments in the main section of paper, most of the hyperparameters are agnostic to tasks and changing them leads to only marginal increase or decrease in performance. For the FB implementation we additionally used flow-based imitation policy for actor, trained with BC loss.
\begin{table}[t]
\centering
\small
\renewcommand{\arraystretch}{1.10}
\setlength{\tabcolsep}{5pt}
\begin{tabular}{llcccccc}
\toprule
Domain & Task & $\eta$ & $T$ & $\lambda_{\chi}$ & $\lambda_{z}$ & $w_{\max}$ & $N$ \\
\midrule

\multirow{4}{*}{Cheetah}
& run            & \multirow{4}{*}{$5\times10^{-4}$} & 200 & 0     & 0.05 & \multirow{4}{*}{100} & \multirow{4}{*}{256} \\
& run\_backward   &                                   & 100 & 0.005 & 0.05 &                      &                      \\
& walk           &                                   & 200 & 0     & 0.05 &                      &                      \\
& walk\_backward  &                                   & 200 & 0.001 & 0.02 &                      &                      \\
\midrule

\multirow{4}{*}{Pointmass}
& reach\_bottom\_left  & \multirow{4}{*}{$5\times10^{-4}$} & \multirow{4}{*}{200} & 0.005 & \multirow{4}{*}{0.02} & \multirow{2}{*}{50} & \multirow{4}{*}{256} \\
& reach\_bottom\_right &                                   &                      & 0.001 &                      &                     &                      \\
& reach\_top\_left     &                                   &                      & 0.001 &                      & 100                 &                      \\
& reach\_top\_right    &                                   &                      & 0.001 &                      & 50                  &                      \\
\midrule

\multirow{4}{*}{Quadruped}
& jump  & \multirow{4}{*}{$5\times10^{-4}$} & \multirow{4}{*}{200} & 0     & 0.02 & \multirow{2}{*}{100} & \multirow{4}{*}{256} \\
& run   &                                   &                      & 0.005 & 0    &                      &                      \\
& stand &                                   &                      & 0     & 0.02 & 50                   &                      \\
& walk  &                                   &                      & 0.005 & 0.05 & 100                  &                      \\
\midrule

\multirow{4}{*}{Walker}
& run   & \multirow{2}{*}{$1\times10^{-4}$} & 200                 & 0     & \multirow{2}{*}{0.05} & 100                 & \multirow{4}{*}{256} \\
& spin  &                                   & 100                 & 0.001 &                      & 50                  &                      \\
& stand & \multirow{2}{*}{$5\times10^{-4}$} & \multirow{2}{*}{200} & 0     & \multirow{2}{*}{0.02} & \multirow{2}{*}{100} &                      \\
& walk  &                                   &                      & 0.005 &                      &                      &                      \\
\bottomrule
\end{tabular}
\caption{\textbf{Best ZOL hyperparameters per DMC domain-task}}
\label{tab:zol-dmc-hparams}
\end{table}

\begin{table}[t]
\centering
\small
\renewcommand{\arraystretch}{1.10}
\setlength{\tabcolsep}{5pt}
\begin{tabular}{llcccccc}
\toprule
Domain & Task & $\eta$ & $T$ & $\lambda_{\chi}$ & $\lambda_{z}$ & $w_{\max}$ & $N$ \\
\midrule
\multirow{5}{*}{antmaze-large-navigate-v0} & antmaze-large-navigate-singletask-task1-v0 & $1\times10^{-3}$ & 200 & 0.05 & 0.005 & 20 & 512 \\
 & antmaze-large-navigate-singletask-task2-v0 & $3\times10^{-3}$ & 200 & 0.05 & 0.005 & 50 & 512 \\
 & antmaze-large-navigate-singletask-task3-v0 & $3\times10^{-3}$ & 500 & 0.01 & 0.001 & 50 & 512 \\
 & antmaze-large-navigate-singletask-task4-v0 & $3\times10^{-3}$ & 200 & 0.05 & 0.005 & 20 & 512 \\
 & antmaze-large-navigate-singletask-task5-v0 & $1\times10^{-3}$ & 200 & 0.05 & 0.005 & 50 & 512 \\
\midrule
\multirow{5}{*}{antmaze-large-stitch-v0} & antmaze-large-stitch-singletask-task1-v0 & $1\times10^{-3}$ & 200 & 0.01 & 0.001 & 50 & 1 \\
 & antmaze-large-stitch-singletask-task2-v0 & $1\times10^{-3}$ & 200 & 0.01 & 0.001 & 50 & 1 \\
 & antmaze-large-stitch-singletask-task3-v0 & $1\times10^{-3}$ & 200 & 0.01 & 0.001 & 50 & 512 \\
 & antmaze-large-stitch-singletask-task4-v0 & $1\times10^{-3}$ & 200 & 0.01 & 0.001 & 50 & 512 \\
 & antmaze-large-stitch-singletask-task5-v0 & $1\times10^{-3}$ & 200 & 0.01 & 0.001 & 50 & 512 \\
\midrule
\multirow{5}{*}{antmaze-medium-explore-v0} & antmaze-medium-explore-singletask-task1-v0 & $1\times10^{-3}$ & 200 & 0.01 & 0.001 & 50 & 512 \\
 & antmaze-medium-explore-singletask-task2-v0 & $1\times10^{-3}$ & 200 & 0.01 & 0.001 & 50 & 1 \\
 & antmaze-medium-explore-singletask-task3-v0 & $1\times10^{-3}$ & 200 & 0.01 & 0.001 & 50 & 512 \\
 & antmaze-medium-explore-singletask-task4-v0 & $1\times10^{-3}$ & 200 & 0.01 & 0.001 & 50 & 512 \\
 & antmaze-medium-explore-singletask-task5-v0 & $1\times10^{-3}$ & 200 & 0.01 & 0.001 & 50 & 512 \\
\midrule
\multirow{5}{*}{antmaze-medium-navigate-v0} & antmaze-medium-navigate-singletask-task1-v0 & $1\times10^{-3}$ & 200 & 0.05 & 0.001 & 50 & 512 \\
 & antmaze-medium-navigate-singletask-task2-v0 & $1\times10^{-3}$ & 200 & 0.05 & 0.001 & 50 & 512 \\
 & antmaze-medium-navigate-singletask-task3-v0 & $1\times10^{-3}$ & 500 & 0.05 & 0.005 & 20 & 512 \\
 & antmaze-medium-navigate-singletask-task4-v0 & $3\times10^{-3}$ & 200 & 0.01 & 0.005 & 20 & 512 \\
 & antmaze-medium-navigate-singletask-task5-v0 & $1\times10^{-3}$ & 200 & 0.01 & 0.005 & 50 & 512 \\
\midrule
\multirow{5}{*}{antmaze-medium-stitch-v0} & antmaze-medium-stitch-singletask-task1-v0 & $1\times10^{-3}$ & 500 & 0.05 & 0.005 & 20 & 512 \\
 & antmaze-medium-stitch-singletask-task2-v0 & $3\times10^{-3}$ & 500 & 0.05 & 0.005 & 20 & 512 \\
 & antmaze-medium-stitch-singletask-task3-v0 & $5\times10^{-3}$ & 500 & 0.01 & 0.001 & 50 & 512 \\
 & antmaze-medium-stitch-singletask-task4-v0 & $3\times10^{-3}$ & 200 & 0.01 & 0.001 & 50 & 512 \\
 & antmaze-medium-stitch-singletask-task5-v0 & $1\times10^{-3}$ & 200 & 0.05 & 0.001 & 50 & 512 \\
\midrule
\multirow{5}{*}{cube-double-play-v0} & cube-double-play-singletask-task1-v0 & $1\times10^{-3}$ & 200 & 0.01 & 0.001 & 50 & 1 \\
 & cube-double-play-singletask-task2-v0 & $1\times10^{-3}$ & 200 & 0.01 & 0.001 & 50 & 1 \\
 & cube-double-play-singletask-task3-v0 & $1\times10^{-3}$ & 200 & 0.01 & 0.001 & 50 & 1 \\
 & cube-double-play-singletask-task4-v0 & $1\times10^{-3}$ & 200 & 0.01 & 0.001 & 50 & 1 \\
 & cube-double-play-singletask-task5-v0 & $1\times10^{-3}$ & 200 & 0.01 & 0.001 & 50 & 1 \\
\midrule
\multirow{5}{*}{cube-single-play-v0} & cube-single-play-singletask-task1-v0 & $1\times10^{-3}$ & 200 & 0.01 & 0.001 & 50 & 1 \\
 & cube-single-play-singletask-task2-v0 & $1\times10^{-3}$ & 200 & 0.01 & 0.001 & 50 & 512 \\
 & cube-single-play-singletask-task3-v0 & $1\times10^{-3}$ & 200 & 0.01 & 0.001 & 50 & 1 \\
 & cube-single-play-singletask-task4-v0 & $1\times10^{-3}$ & 200 & 0.01 & 0.001 & 50 & 512 \\
 & cube-single-play-singletask-task5-v0 & $1\times10^{-3}$ & 200 & 0.01 & 0.001 & 50 & 512 \\
\midrule
\multirow{5}{*}{puzzle-3x3-play-v0} & puzzle-3x3-play-singletask-task1-v0 & $1\times10^{-3}$ & 200 & 0.01 & 0.001 & 50 & 512 \\
 & puzzle-3x3-play-singletask-task2-v0 & $1\times10^{-3}$ & 200 & 0.01 & 0.001 & 50 & 1 \\
 & puzzle-3x3-play-singletask-task3-v0 & $1\times10^{-3}$ & 200 & 0.01 & 0.001 & 50 & 1 \\
 & puzzle-3x3-play-singletask-task4-v0 & $1\times10^{-3}$ & 200 & 0.01 & 0.001 & 50 & 1 \\
 & puzzle-3x3-play-singletask-task5-v0 & $1\times10^{-3}$ & 200 & 0.01 & 0.001 & 50 & 1 \\
\bottomrule
\end{tabular}
\caption{\textbf{Best ZOL hyperparameters per OGBench domain-task.}}
\label{tab:zol-ogbench-hparams}
\end{table}

\subsection{Baselines}

\noindent\textbf{LoLA.}
LoLA is an \emph{online latent adaptation} baseline that performs gradient-based search directly in the latent space $z$ of a pretrained behavior foundation model (BFM), rather than updating the full policy network in action space.
In our TD3-based LoLA implementation, we sweep the TD3 update-to-data ratio (UTD) in $\{1,4,8\}$ and the actor update frequency in $\{1,4\}$, with actor/critic learning rates fixed to $10^{-4}$.
For stability and fast adaptation, we use a small residual network (2-layer MLP, 64 hidden units) when employing a residual critic; otherwise the critic is learned from scratch using a 2-layer MLP with 1024 hidden units.
We additionally sweep the warm-start schedule (0 or 5000 steps before updates).
For the LoLA-specific search procedure, we sweep the lookahead horizon in $\{50,100,250\}$, use 10 total rollouts per update, and use 5 rollouts per sampled state to estimate the baseline.
We also sweep the reset probability to the initial-state distribution in $\{0,0.2,0.5\}$ (otherwise sampling starting states from the replay buffer), and sweep the LoLA step size in $\{0.1,0.05\}$.
Across domains, we train online adaptation methods for 300 episodes with 5 random seeds, and report evaluation returns averaged over 50 episodes.

\medskip
\noindent\textbf{ReLA.}
is an \emph{online residual latent adaptation} baseline that augments latent-space search with a residual value-learning component to correct for mismatch between the pretrained BFM critic/value estimates and the downstream task.
We follow the same TD3-based backbone and hyperparameter sweeps used for the other TD3-based online adaptation baselines (UTD $\in\{1,4,8\}$, actor update frequency $\in\{1,4\}$, actor/critic learning rate $10^{-4}$, warm-start in $\{0,5000\}$ steps).
When using a residual critic, we parameterize the residual with a 2-layer MLP with 64 hidden units; otherwise we learn a critic from scratch with a 2-layer MLP with 1024 hidden units.
As with LoLA, we train for 300 online episodes with 5 seeds and evaluate by averaging returns over 50 episodes.

\medskip
\noindent\textbf{HILP.}
For HILP, we use the official evaluation/training codebase for zero-shot RL and train with TD3 as the underlying offline RL optimizer.
For zero-shot RL on DMC/ExORL-style settings, we follow the standard prompting/inference setup where a latent $z$ is either sampled from the prior (probability 0.5) or set to a goal-reaching latent (probability 0.5); at test time, the task-specific latent is inferred via linear regression from a fixed set of transition samples.
Hilbert representations are trained with goal relabeling by sampling future states from the same trajectory with probability 0.625 and sampling random states from the dataset with probability $0.375$, and we add $\varepsilon=10^{-6}$ inside the Hilbert-distance computation for numerical stability.

\medskip
\noindent\textbf{TD-JEPA.}
In all our experiments we take official codebase \footnote{\url{https://github.com/facebookresearch/td_jepa}} and take recommended hyperparamters together with neural networks architecture from paper across all benchmarks (ExORL/OGBench).

\medskip
\noindent
\textbf{FB-CPR}.
For the experiments on HumEnv benchmark from \S\ref{sec:experiments}, we take pretrained FB-CPR from original repository \footnote{\url{https://github.com/facebookresearch/metamotivo}}. Thus all hyperparameters of the pretrained model is aligned with original implementation. For the motion tracking, we used averaging over window with horizon $8$. 

\emph{Networks and architectures.}
All networks are MLPs with ReLU activations, except the first hidden layer which uses layer normalization followed by $\tanh$.
For $z$-conditioned critics (e.g., $F$ and $Q$) and the actor, we use the embedding-style architecture:
two embedding towers (2 layers, 1024 hidden units, output dim 512) followed by a main MLP (2 layers, 1024 hidden units).
Critics are implemented as an ensemble of $2$ parallel networks; the actor uses $1$ network and outputs a $\tanh$-squashed mean with fixed Gaussian standard deviation $0.2$.
The backward/state embedding $B(s)$ is a simple MLP with one hidden layer of $256$ units and $\ell_2$-normalized output (dim $256$).
The discriminator $D_\psi(s,z)$ is an MLP with $3$ hidden layers of $1024$ units (first layer: layernorm+$\tanh$, others ReLU) and a sigmoid output; it is trained with learning rate $10^{-5}$ and a WGAN-style gradient penalty with coefficient $10$.

\emph{Inference.}
For reward-based inference, we compute a latent by weighted regression from the online buffer:
\[
z_r \propto \mathbb{E}_{s'\sim D_{\text{online}}}\!\left[\exp\!\big(10\,r(s')\big)\, B(s')\, r(s')\right],
\]
estimating the expectation with $100$k samples.
For goal-reaching, we use $z_g = B(g)$.
For motion tracking of a reference motion $\tau=\{s_j\}$, we infer a time-dependent latent using a sliding window of length $L$ equal to the pre-training sequence length (here $L=8$):
\[
z_t \propto \sum_{j=t+1}^{t+L+1} B(s_j).
\]

% We found that on HumEnv benchmark, the recently reported results in \cite{sikchi2025fast} for the same setting underestimates FB-CPR results. 

\section{Code}
We provide full training scripts, notebooks, together with visualizations from paper with the submission materials.

\section{Details on Ablation Study}
\label{app:ablation}
In this section we provide more nuanced discussion on the chosen hyperparameters and how they affect overall return.
\subsection{ExORL DeepMind}
\textbf{Protocol.}
We run a full-factorial sweep over ZOL hyperparameters on the \textbf{16 state-based ExORL DeepMind Control tasks}
(Walker/Cheetah/Quadruped/PointMass). We report the \textbf{mean return over the 16 tasks} (equal weighting).
The sweep includes \textbf{72 configurations} (\textbf{1152} task-level runs; $72\times16$). Unless noted, $n_\mu=256$.

\textbf{Overall robustness.}
Across ranges, ZOL is stable: for each hyperparameter, the best--worst marginal gap is \textbf{<0.5\%} of the mean return.

\textbf{Learning dynamics (\# optimization steps).}
Varying $\texttt{num\_steps}\in\{100,200\}$ shows a mild preference for fewer steps:
$100$ performs best (596.46) vs.\ $200$ (593.82; $-0.44\%$).

\textbf{Trust regularization.}
For $\texttt{trust\_l2\_coef}\in\{0,0.02,0.05\}$, a moderate value is best:
$0.02$ achieves the highest mean (596.23) vs.\ $0$ (593.99; $-0.38\%$), while $0.05$ remains competitive (595.20).

\textbf{$\chi^2$-penalty.}
With $\texttt{chi2\_coef}\in\{0,0.001,0.005\}$, $0$ (595.98) and $0.001$ (595.74) are essentially identical, while
$0.005$ degrades performance (593.71; $-0.38\%$), suggesting large $\chi^2$ penalties can be overly restrictive.

\textbf{Weight clipping.}
For $\texttt{weight\_clip}\in\{50,100\}$, differences are negligible; $100$ is slightly better
(595.43 vs.\ 594.85; $+0.10\%$).

\subsection{SMPL Humanoid}
% \begin{figure}[h!]
%     \centering
%     \includegraphics[width=0.5\linewidth]{Images/Humanoid_ablation.png}
%     \caption{The influence of ZOL objective hyperparameters on the normalized return on the SMPL Humanoid.}
%     % \vspace{-5mm}
%     \label{fig:humanoid-ablation}
% \end{figure}
We run a full-factorial sweep over ZOL hyperparameters on the \textbf{45 SMPL humanoid tasks}. We report the \textbf{mean return over the 45 tasks} (equal weighting).
The results were obtained using 50 environments and 100 runs per environment. For ablation, we used 10 samples to infer a latent $z$.
The sweep includes \textbf{15 configurations} (\textbf{675} task-level runs; $15\times45$).

\textbf{Overall robustness.}
Across ranges, ZOL is stable: for each hyperparameter, the best--worst marginal gap is \textbf{<0.5\%} of the mean return.

\textbf{Learning dynamics (\# optimization steps).}
Varying $\texttt{num\_steps}\in\{100,150,200\}$ shows a mild preference for moderate steps:
$150$ performs best (151.38) vs.\ $100$ (150.96; $-0.28\%$) and $200$ (151.26).

\textbf{Trust regularization.}
For $\texttt{trust\_l2\_coef}\in\{0.01,0.1,0.5\}$, results are very stable with a slight preference for higher regularization:
$0.5$ achieves the highest mean (151.29) vs.\ $0.1$ (151.15; $-0.09\%$), while $0.01$ remains competitive (151.23).

\textbf{$\chi^2$-penalty.}
With $\texttt{chi2\_coef}\in\{0.01,0.1,0.5\}$, smaller penalties are preferred: $0.01$ performs best (151.50), while
$0.5$ degrades performance slightly (151.03; $-0.31\%$), suggesting large $\chi^2$ penalties can be overly restrictive.

\textbf{Weight clipping.}
For $\texttt{weight\_clip}\in\{50,100\}$, differences are negligible; $50$ is nominally better
(151.20 vs.\ 151.20; $<0.01\%$).

\section{Discussion and potential impact}
\subsection{Behavior-Supported Policy Parametrization}

To enforce that the target policy $\pi$ remains within the support of the dataset, one would like to guarantee that
\begin{equation}
    %\label{eq:constraint}
    \pi(a\mid s) = 0
    \quad\text{whenever}\quad
    d^\beta(s,a) = 0.
\end{equation}
The low-rank density ratio estimator~\eqref{eq:ratio-final} define a \emph{behavior-supported} policy by reweighting the behavior policy $\beta$ with the estimated ratio:
\begin{equation}
    \pi_z(a\mid s)
    \hspace{-0.5mm}\propto\hspace{-0.5mm}
    w_{\pi_z}(s,a)\, \beta(a | s)
    \hspace{-0.5mm}\approx\hspace{-0.5mm}
    (1-\gamma)\,W_{\pi_z}^\top B(s,a)\,\beta(a | s),
    %\label{eq:policy-supported}
\end{equation}
with normalization $\sum_a \pi(a\mid s)\hspace{-1mm}= \hspace{-1mm}1$ for all $s$. This parametrization has several immediate consequences. First, it enforces the support constraint, because $\beta(a\mid s) = 0$ implies $\pi(a\mid s) = 0$ by construction. Thus $\pi(\cdot\mid s)$ is absolutely continuous with respect to $\beta(\cdot\mid s)$, and the learned policy never assigns positive probability to unseen actions. Second, the policy inherits the low-rank structure of the FB representation: it depends on $(s,a)$ only through the backward embedding $B(s,a)$ and on the policy $\pi$ only through the forward expectation vector $W\pi$. This acts as a structural regularizer on the policy class. Third, the identity
\[
    \mathbb{E}_{d^\pi}[f]
    =
    \mathbb{E}_{d^\beta}[w_\pi f]
\]
continues to hold by construction, so that ZOL remains compatible with the stationary distribution correction frameworks used in DualDICE and OptiDICE. In this sense, the FB representation provides a concrete parametrization of $w_\pi$ that fits naturally into existing DICE-style algorithms.
\begin{figure}[h!]
    \centering
    \includegraphics[width=0.53\linewidth]{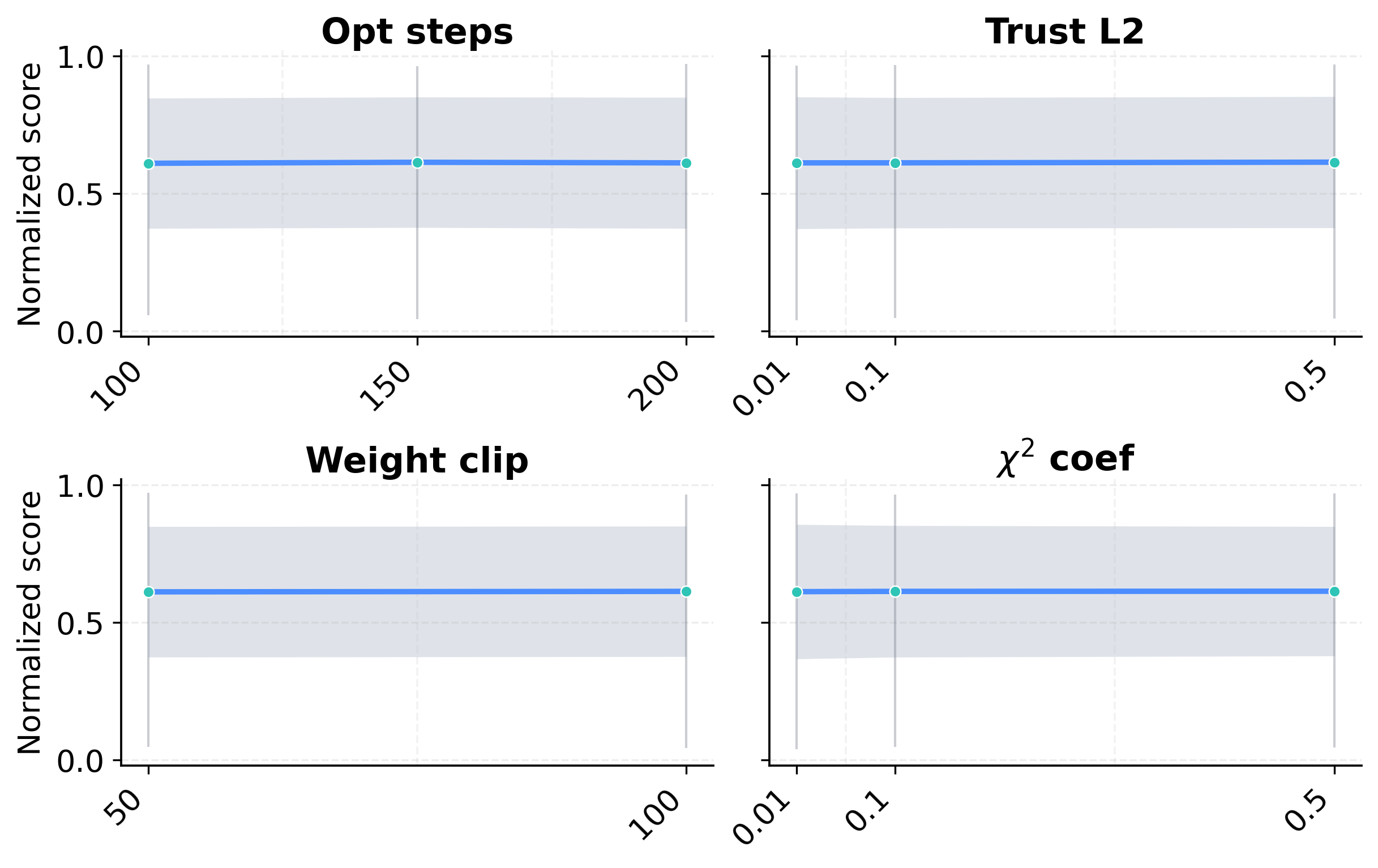}
    \vspace{-2mm}
    \caption{Hyperparameters ablation on the normalized return on the SMPL Humanoid benchmark.}
    \vspace{-6mm}
    \label{fig:toy-2d-humanoid}
\end{figure}
\begin{table*}[t]
\caption{\textbf{FB vs. ZOL comparison on all tasks.} We report mean $\pm$ std over evaluation rollouts. ZOL (Ours) is highlighted, along with the absolute improvement $\Delta$ and relative gain where applicable.}
\label{tab:fb_vs_zol_full_names}
\vskip 0.1in
\centering
\begin{small}
\begin{sc}
\setlength{\tabcolsep}{6pt}
\renewcommand{\arraystretch}{1.05}
\begin{tabular}{lrrr}
\toprule
Task & FB & \cellcolor{highlight!50}\textbf{ZOL (Ours)} & \textcolor{best}{\textbf{$\Delta$}}(Improvement) \\
\midrule
crawl-0.4-0-d & \mstd{173.92}{52} & \cellcolor{highlight!50}\bestmstd{201.77}{39} & \imp{27.85}{16.01} \\
crawl-0.4-0-u & \mstd{108.77}{6}  & \cellcolor{highlight!50}\bestmstd{110.53}{5}  & \imp{1.76}{1.62} \\
crawl-0.4-2-d & \cellcolor{highlight!50}\bestmstd{18.73}{7} & \mstd{9.36}{4} & -9.37 \\
crawl-0.4-2-u & \mstd{13.60}{3}   & \cellcolor{highlight!50}\bestmstd{20.55}{6}   & \imp{6.95}{51.10} \\
crawl-0.5-0-d & \mstd{167.79}{42} & \cellcolor{highlight!50}\bestmstd{192.44}{41} & \imp{24.65}{14.69} \\
crawl-0.5-0-u & \mstd{97.75}{6}   & \cellcolor{highlight!50}\bestmstd{110.39}{5}  & \imp{12.64}{12.93} \\
crawl-0.5-2-d & \cellcolor{highlight!50}\bestmstd{34.68}{8} & \mstd{26.96}{9} & -7.72 \\
crawl-0.5-2-u & \mstd{18.07}{3}   & \cellcolor{highlight!50}\bestmstd{20.80}{5}   & \imp{2.73}{15.11} \\
crouch-0      & \mstd{243.90}{16} & \cellcolor{highlight!50}\bestmstd{249.98}{8}  & \imp{6.08}{2.49} \\
headstand     & \mstd{30.93}{22}  & \cellcolor{highlight!50}\bestmstd{40.26}{29}  & \imp{9.33}{30.16} \\
jump-2        & \cellcolor{highlight!50}\bestmstd{38.80}{1} & \mstd{38.19}{1} & -0.61 \\
lieonground-down & \mstd{196.89}{13} & \cellcolor{highlight!50}\bestmstd{204.78}{23} & \imp{7.89}{4.01} \\
lieonground-up & \mstd{216.69}{3}  & \cellcolor{highlight!50}\bestmstd{219.38}{9}  & \imp{2.69}{1.24} \\
move-ego--90-2 & \mstd{213.51}{3} & \cellcolor{highlight!50}\bestmstd{234.58}{2} & \imp{21.07}{9.87} \\
move-ego--90-4 & \cellcolor{highlight!50}\bestmstd{215.37}{3} & \mstd{207.64}{3} & -7.73 \\
move-ego-0-0   & \mstd{269.19}{4} & \cellcolor{highlight!50}\bestmstd{271.14}{5} & \imp{1.95}{0.72} \\
move-ego-0-2   & \mstd{261.33}{2} & \cellcolor{highlight!50}\bestmstd{264.99}{2} & \imp{3.66}{1.40} \\
move-ego-0-4   & \mstd{253.32}{4} & \cellcolor{highlight!50}\bestmstd{254.29}{2} & \imp{0.97}{0.38} \\
move-ego-180-2 & \mstd{219.57}{2} & \cellcolor{highlight!50}\bestmstd{232.52}{2} & \imp{12.95}{5.90} \\
move-ego-180-4 & \mstd{205.80}{2} & \cellcolor{highlight!50}\bestmstd{205.82}{1} & \imp{0.02}{0.01} \\
move-ego-90-2  & \mstd{228.33}{3} & \cellcolor{highlight!50}\bestmstd{240.53}{3} & \imp{12.20}{5.34} \\
move-ego-90-4  & \mstd{183.23}{2} & \cellcolor{highlight!50}\bestmstd{186.17}{2} & \imp{2.94}{1.60} \\
move-ego-low--90-2 & \cellcolor{highlight!50}\bestmstd{248.29}{4} & \mstd{247.75}{5} & -0.54 \\
move-ego-low-0-0 & \mstd{198.20}{23} & \cellcolor{highlight!50}\bestmstd{200.52}{23} & \imp{2.32}{1.17} \\
move-ego-low-0-2 & \mstd{214.44}{3}  & \cellcolor{highlight!50}\bestmstd{252.94}{4}  & \imp{38.50}{17.95} \\
move-ego-low-180-2 & \mstd{114.21}{5}  & \cellcolor{highlight!50}\bestmstd{119.47}{3}  & \imp{5.26}{4.61} \\
move-ego-low-90-2 & \mstd{235.99}{3}  & \cellcolor{highlight!50}\bestmstd{239.78}{6}  & \imp{3.79}{1.61} \\
raisearms-h-h & \mstd{248.58}{6} & \cellcolor{highlight!50}\bestmstd{249.39}{5} & \imp{0.81}{0.33} \\
raisearms-h-l & \mstd{42.55}{1}  & \cellcolor{highlight!50}\bestmstd{241.40}{4} & \imp{198.85}{467.33} \\
raisearms-h-m & \bestmstd{243.15}{10} & \cellcolor{highlight!50}\mstd{183.94}{20} & -59.21 \\
raisearms-l-h & \mstd{188.61}{5} & \cellcolor{highlight!50}\bestmstd{256.10}{3} & \imp{67.49}{35.78} \\
raisearms-l-l & \mstd{265.32}{3} & \cellcolor{highlight!50}\bestmstd{268.07}{4} & \imp{2.75}{1.04} \\
raisearms-l-m & \cellcolor{highlight!50}\bestmstd{257.94}{4} & \mstd{160.34}{8} & -97.60 \\
raisearms-m-h & \cellcolor{highlight!50}\bestmstd{224.76}{12} & \mstd{175.52}{14} & -49.24 \\
raisearms-m-l & \mstd{85.16}{4}  & \cellcolor{highlight!50}\bestmstd{133.25}{9} & \imp{48.09}{56.47} \\
raisearms-m-m & \cellcolor{highlight!50}\bestmstd{258.12}{4} & \mstd{222.33}{16} & -35.79 \\
rotate-x--5-0.8 & \mstd{3.18}{2}   & \cellcolor{highlight!50}\bestmstd{17.89}{3}   & \imp{14.71}{462.58} \\
rotate-x-5-0.8  & \mstd{2.02}{2}   & \cellcolor{highlight!50}\bestmstd{18.83}{2}   & \imp{16.81}{832.18} \\
rotate-y--5-0.8 & \mstd{159.39}{7} & \cellcolor{highlight!50}\bestmstd{225.83}{8}  & \imp{66.44}{41.68} \\
rotate-y-5-0.8  & \cellcolor{highlight!50}\bestmstd{134.09}{5} & \mstd{124.09}{5} & -10.00 \\
rotate-z--5-0.8 & \cellcolor{highlight!50}\bestmstd{102.32}{21} & \mstd{81.32}{9} & -21.00 \\
rotate-z-5-0.8  & \mstd{136.08}{3} & \cellcolor{highlight!50}\bestmstd{137.82}{8} & \imp{1.74}{1.28} \\
sitonground     & \cellcolor{highlight!50}\cellcolor{highlight!50}\bestmstd{215.17}{6} & \mstd{211.94}{9} & -3.23 \\
split-0.5       & \cellcolor{highlight!50}\bestmstd{252.20}{3} & \mstd{250.60}{6} & -1.60 \\
split-1         & \cellcolor{highlight!50}\bestmstd{131.52}{53}& \mstd{53.46}{23} & -78.06 \\
\bottomrule
\end{tabular}
\end{sc}
\end{small}
\label{tab:appendix-reward-side-by-side}
\vskip -0.05in
\end{table*}
\end{document}